\newtheorem{theorem}{Theorem}
\newtheorem{lemma}{Lemma}
\newtheorem{remark}{Remark}
\newtheorem{definition}{Definition}
\def\ben{\begin{equation}}
\def\een{\end{equation}}
\def\bmx{\begin{bmatrix}}
\def\emx{\end{bmatrix}}
\DeclareMathOperator*{\argmin}{arg\,min}
\def\balpha{{\boldsymbol{\alpha}}}
\def\A{{\mathbf A}}
\def\D{{\mathcal D}}
\def\E{{\mathbb E}}
\def\F{{\mathcal F}}
\def\N{{\mathcal N}}
\def\R{{\mathbb R}}
\def\S{{\mathcal S}}
\def\T{{\mathcal T}}
\def\U{{\mathbf U}}
\def\V{{\mathcal V}}
\def\X{{\mathcal X}}
\def\Y{{\mathcal Y}}
\def\Z{{\mathcal Z}}
\def\s{{\mathbf s}}
\def\u{{\mathbf u}}
\def\v{{\mathbf v}}
\def\x{{\mathbf x}}
\def\y{{\mathbf y}}
\def\z{{\mathbf z}}
\begin{document}

\title{Geometry-aware Deep Transform}

\author{Jiaji Huang\qquad\qquad Qiang Qiu~\qquad\qquad Robert Calderbank~\qquad\qquad Guillermo Sapiro\\
Duke University\\
Durham, NC, 27708\\
{\tt\small \{jiaji.huang, qiang.qiu, guillermo.sapiro, robert.calderbank\}@duke.edu }
}

\maketitle

\begin{abstract}
   Many recent efforts have been devoted to designing sophisticated deep learning structures, obtaining revolutionary results on benchmark datasets.   The success of these deep learning methods mostly relies on an enormous volume of labeled  training samples to learn a huge number of parameters in a network; therefore,  understanding  the generalization ability of a learned deep network cannot be overlooked, especially when restricted to a small training set, which is  the case for many applications.
   In this paper, we  propose a novel deep learning objective formulation that unifies both the classification and metric learning criteria. We then introduce a geometry-aware deep transform to enable a non-linear discriminative and robust feature transform, which shows competitive performance on small training sets for  both synthetic and real-world data. We further support the proposed framework with a formal $(K,\epsilon)$-robustness analysis.
\end{abstract}

\section{Introduction}

Many recent efforts have been devoted to learning a mapping from low-level image features, e.g.,  image patches~\cite{deepID,deepface}, LBP descriptors~\cite{Cao2013,CSML2010}, to high-level discriminative representations. The learned feature mapping often increases the inter-class separation while reducing the intra-class variation.
This idea dates back at least  to the linear discriminant analysis (LDA) for linear cases;  however,  if we allow the feature mapping to be non-linear, e.g., deep convolutional neural network~\cite{DML,deepID,deepID2}, the discriminability of the learned representation is often significantly enhanced compared to its linear counterpart.

Deep learning techniques achieve unprecedentedly high precision in object and scene classification, where an enormous volume of labeled training samples are often required to learn a rich set of parameters~\cite{largeDNN2012,ImageNet_NIPS,deepID2}.
Despite such revolutionary advances, many real-world classification problems remain challenging, due to the large number of non-linearly separable classes and the scarcity of training samples.
One such example is face verification~\cite{LFWref}, where recently reported successes mostly rely on  huge proprietary training sets, e.g., 4.4 million labeled faces from 4,030 people in \cite{deepface}; however, publicly available training datasets often consist of only a small set of subjects with several samples per subject.
It is a notoriously difficult task to learn from limited training samples a deep structure that can generalize well on testing data \cite{Livni2014}.

While great current attentions are paid to smart manipulation of different deep architectures  for more discriminative representations~\cite{face++,deepID,deepface},  in this paper,  we focus on the generalization problem, i.e., how to encourage a mapping learned from limited training samples to generalize well over testing data. This issue is of significant importance when the training samples are scarce, in which case the network optimized on the training set is likely at the risk of overfitting.
We provide both analytic and experimental illustrations on the generalization errors of a learned deep structure, under several popular objective functions.

We further propose a geometry-aware feature transformation framework, which balances between discriminability and generalization. The proposed framework encourages inter-class separation while at the same time penalizes the distortion of intra-class structure.
This also extends the ``shallow" setup in~\cite{geo-ML} to a deep architecture, also providing theoretical insights regarding robustness. In particular, we show that constraining feature mapping functions to be near-isometry in local sub-regions yields robust algorithms.
We first motivate our framework with a synthetic  example, and then support it through theoretical analysis. We further validate our framework using face verification experiments and report state-of-the-art results on the challenging  LFW face dataset .

Our main contributions are:
\begin{itemize}
\item proposing a novel deep learning objective that unifies the classification and metric learning criteria.
\item providing a theoretical argument showing that awareness of geometry leads to robustness;
\item motivating a general algorithmic framework which considers data geometry in the formulation;
\item designing a learned deep transform, as a particular example of the proposed geometric framework, that achieves state-of-art results.
\end{itemize}

\section{Geometry-aware deep transform}	
\label{sec:gdt}
Deep networks are often optimized for a classification objective, where class-labeled samples are input as training \cite{largeDNN2012, ImageNet_NIPS, deepID,deepface};
or a metric learning objective, where training data are input as positive and negative pairs \cite{DML, deepID2}. \footnote{A positive pair contains two samples from the same class, and a negative pair contain two  samples from different classes.}
In this section, we first propose a novel deep learning objective that unifies the classification and metric learning criteria.
We then introduce a geometry-aware deep transform, and optimize it through standard  back-propagation.
We further support the proposed framework with a formal $(K,\epsilon)$-robustness analysis \cite{Xu2012}.

\subsection{Pedagogic formulation}

We use the following two-class problem as an illustration example:
The first class is generated as
$\x=\U\v/\|\U\v\|$, where $\v$ is with probability (w.p.) $1/2$ from a constrained plane $-y+z=1, x\in[-1,1], z\in[-3,0]$, and w.p. $1/2$ from plane $y+z=1, x\in[-1,1], z\in[0,3]$.
$\U$ is a $d\times 3$ ($d\gg 3$, $d=100$ in this case) matrix that embeds $\x$ into a $d$-dimensional space.
Similarly, the second class is generated as
$\x=\U\u/\|\U\u\|$, where $\u$ is w.p. $1/2$ from $-y+z=-1, x\in[-1,1], z\in[-3,0]$, and w.p. $1/2$ from $y+z=-1, x\in[-1,1], z\in[0,3]$.
For each class, 40 training and 1000 testing samples are generated.
Fig.~\ref{fig:toydata} visualizes the training and testing data by randomly projecting it to a 3 dimensional coordinate system,  with different colors representing different classes.
Observe that the two classes are not linearly separable, which necessitates a non-linear feature transform.
\begin{figure}[h!]
	\subfloat[Training samples: 40 per class.]{\includegraphics[width=0.5\columnwidth]{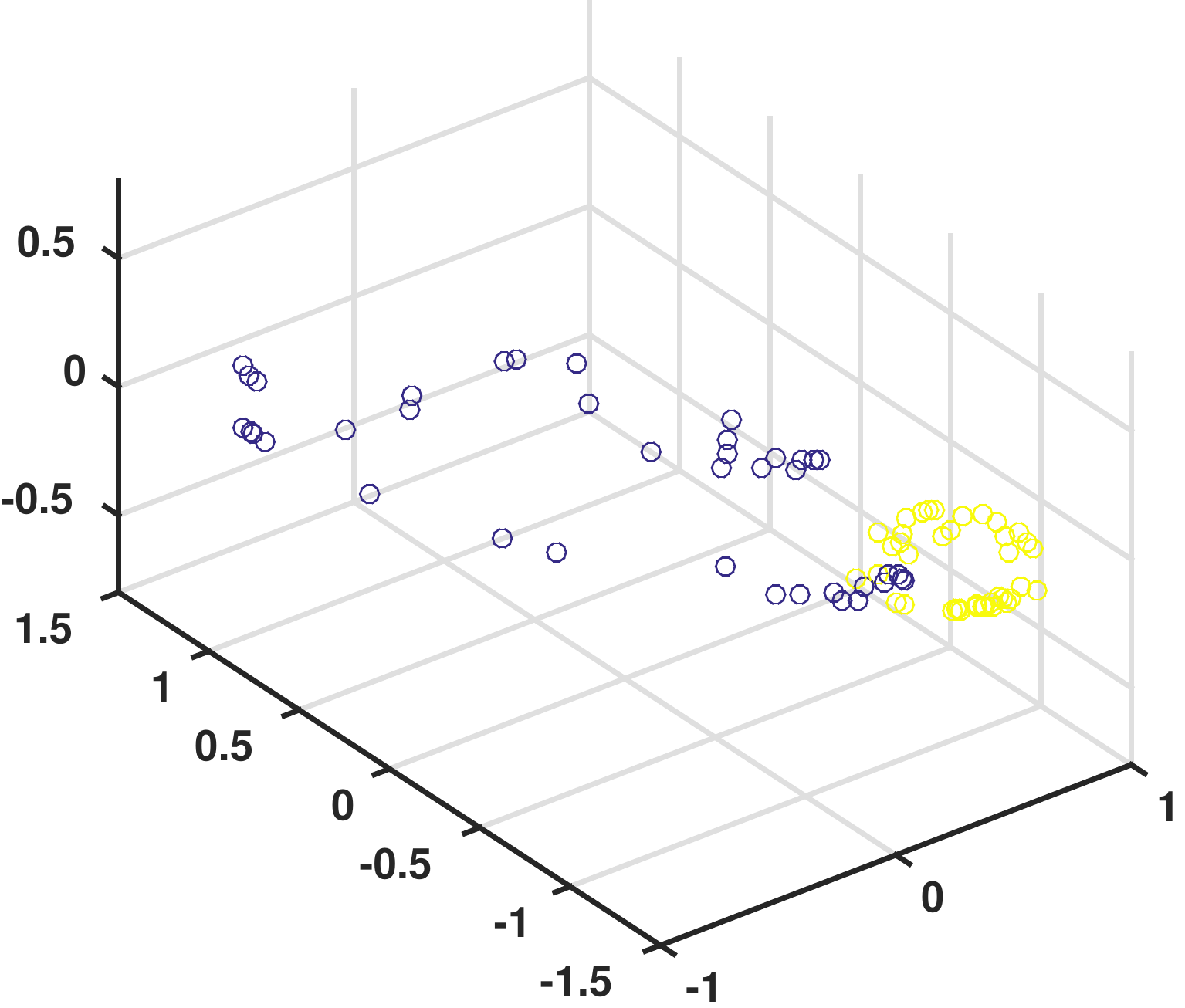}}
	\subfloat[Testing samples: 1000 per class.]{\includegraphics[width=0.5\columnwidth]{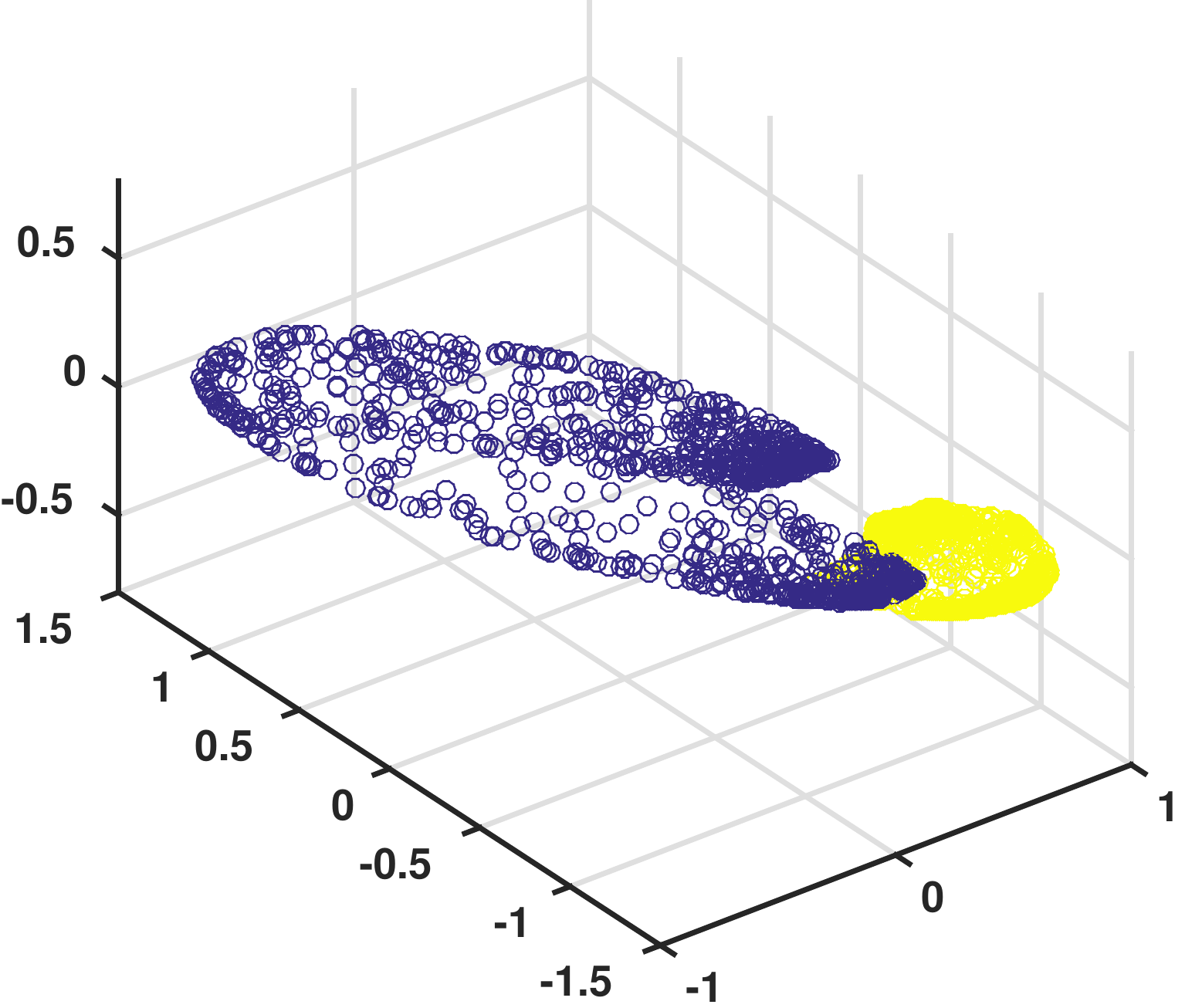}}
	\caption{Training and testing samples.}
	\label{fig:toydata}
\end{figure}
\begin{figure}[h!]
	\subfloat[Transformed training samples.]{\label{fig:naiveForm1_train}\includegraphics[width=0.5\columnwidth]{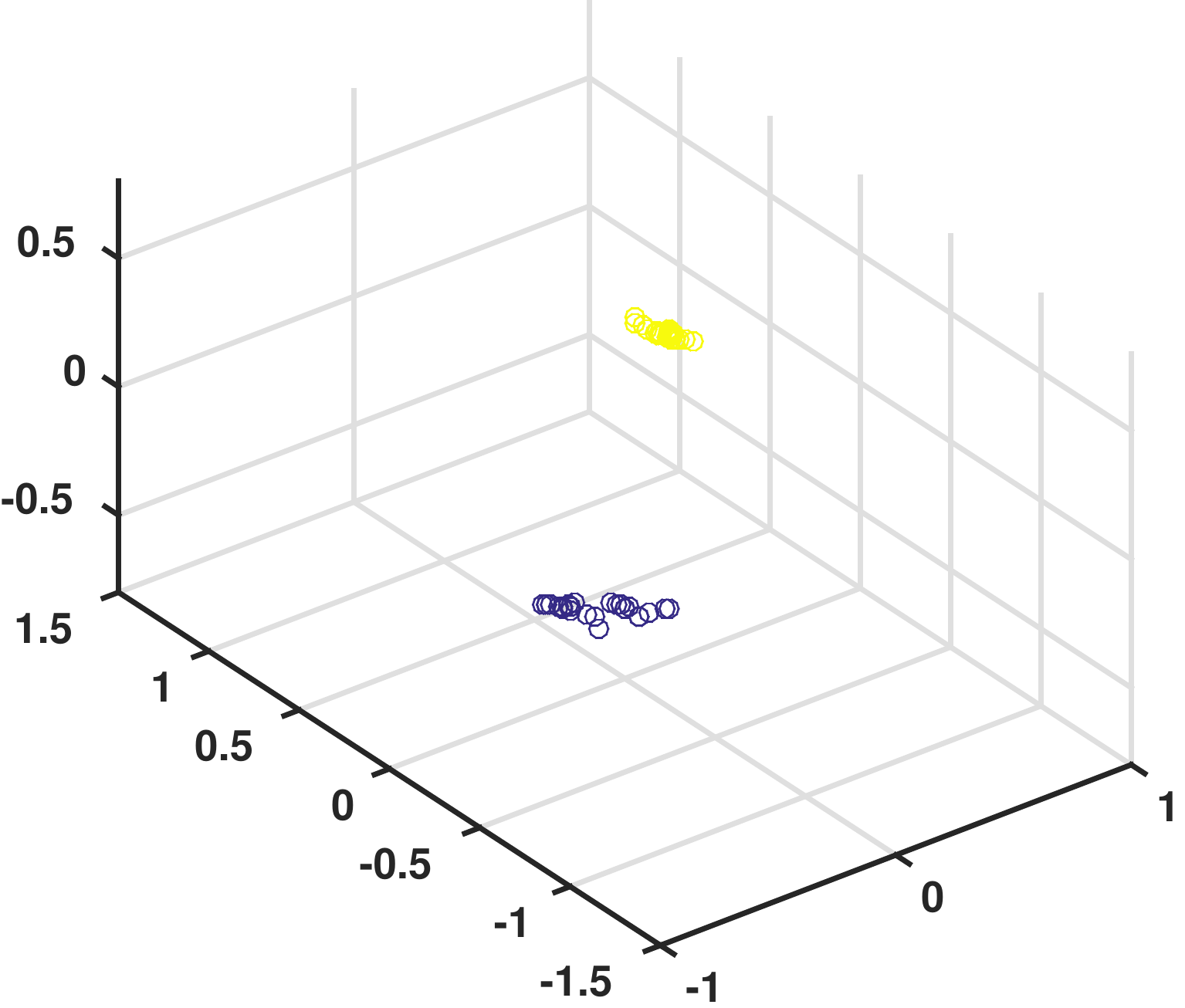}}
	\subfloat[Transformed testing samples.]{\label{fig:naiveForm1_test}\includegraphics[width=0.5\columnwidth]{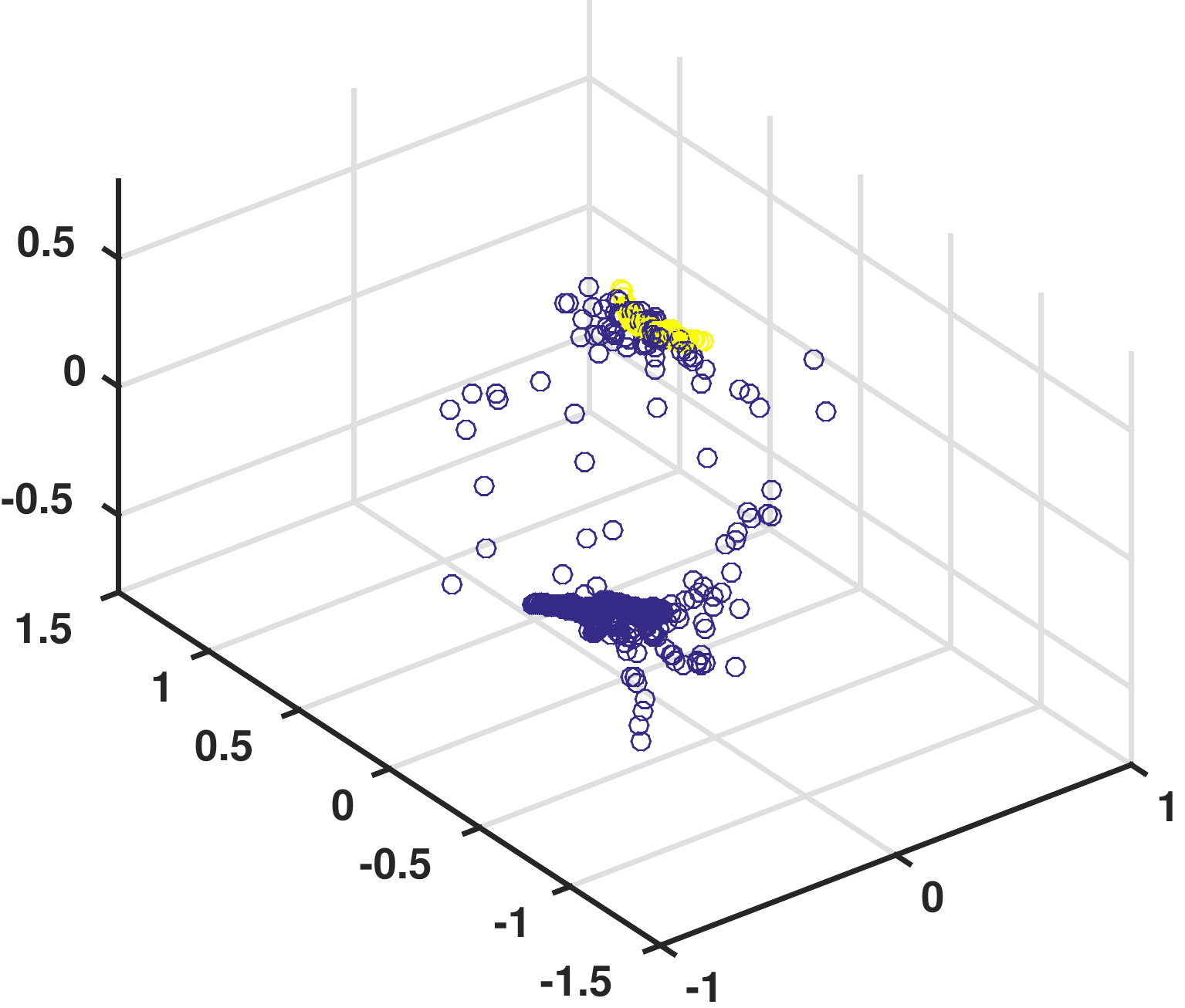}}
	\caption{Transformed features using a metric learning formulation.}
	\label{fig:naiveForm1}
\end{figure}
\begin{figure}[h!]
	\subfloat[Transformed training samples.]{\includegraphics[width=0.5\columnwidth]{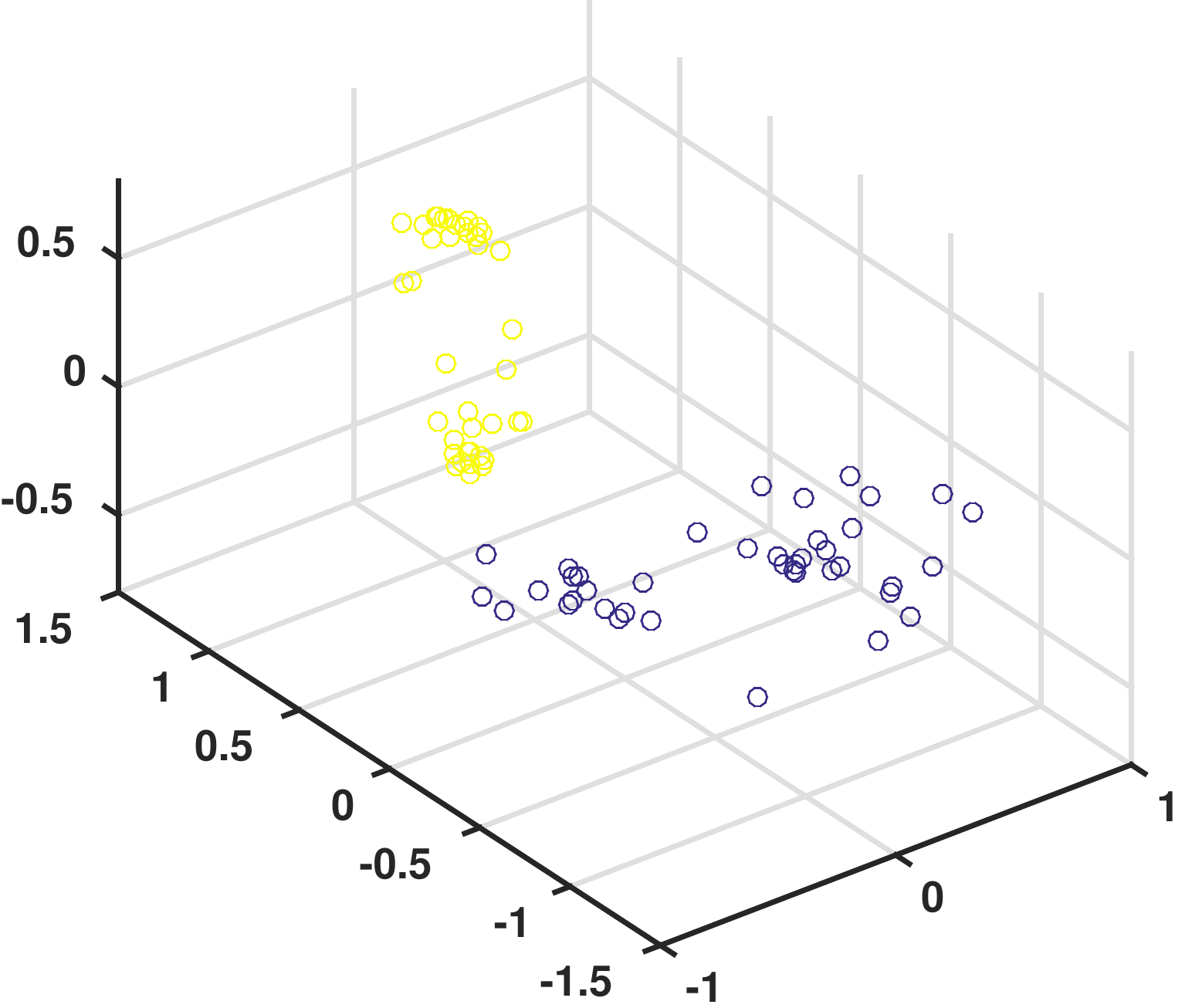}}
	\subfloat[Transformed testing samples.]{\includegraphics[width=0.5\columnwidth]{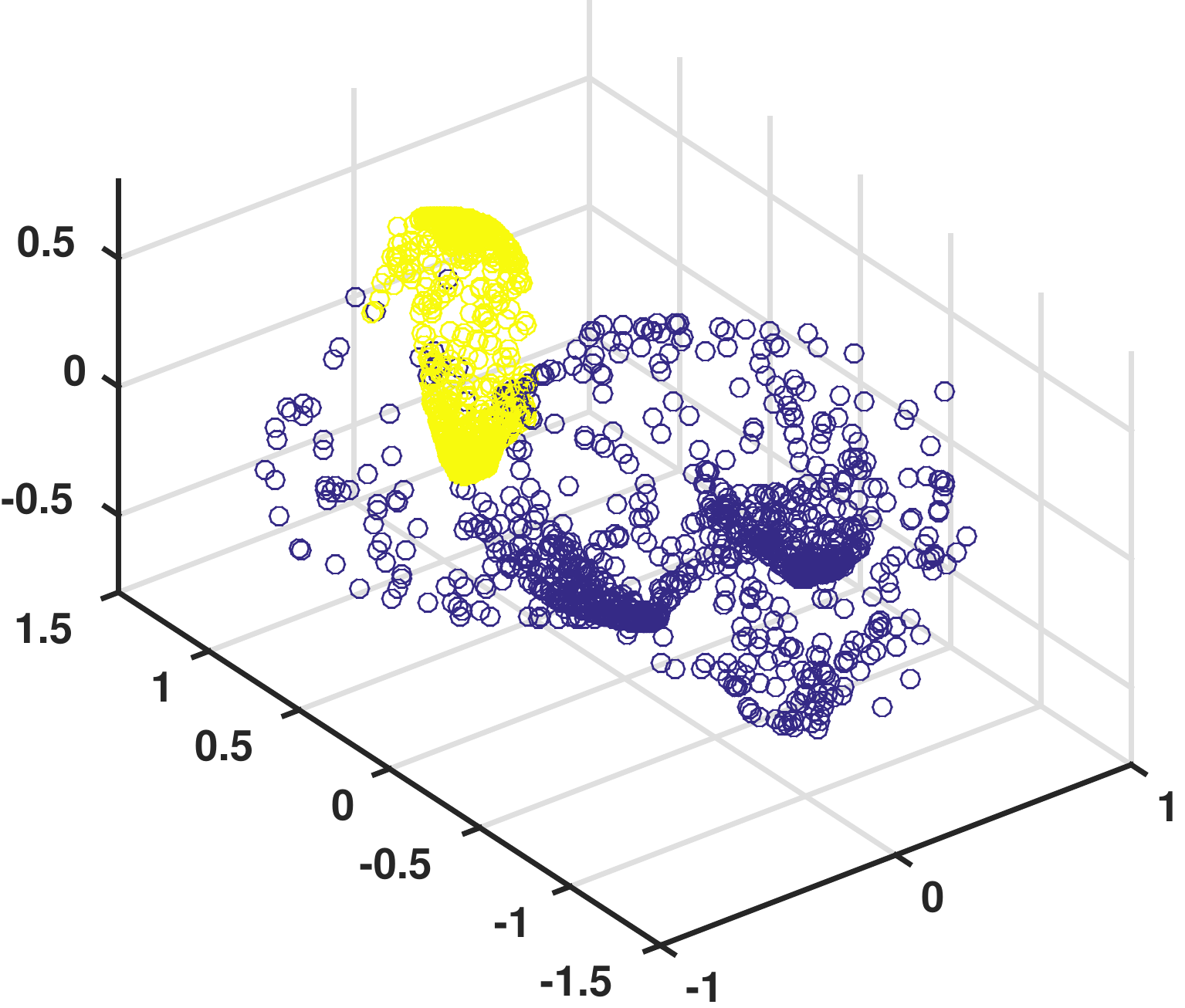}}
	\caption{Transformed features using a classification formulation.}
	\label{fig:naiveForm2}
\end{figure}
\begin{figure}[h!]
	\subfloat[transformed training samples]{\includegraphics[width=0.5\columnwidth]{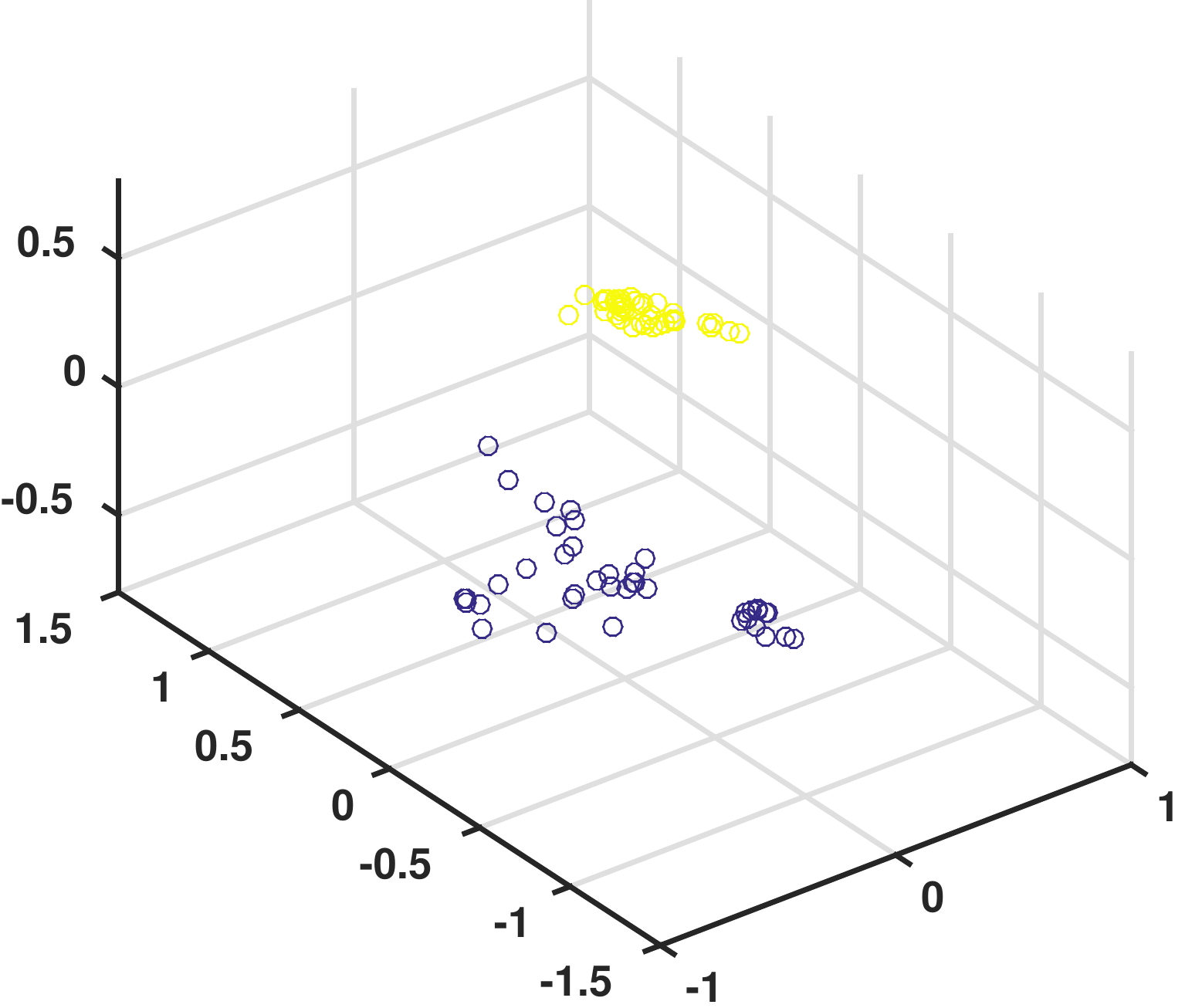}}
	\subfloat[transformed testing samples]{\includegraphics[width=0.5\columnwidth]{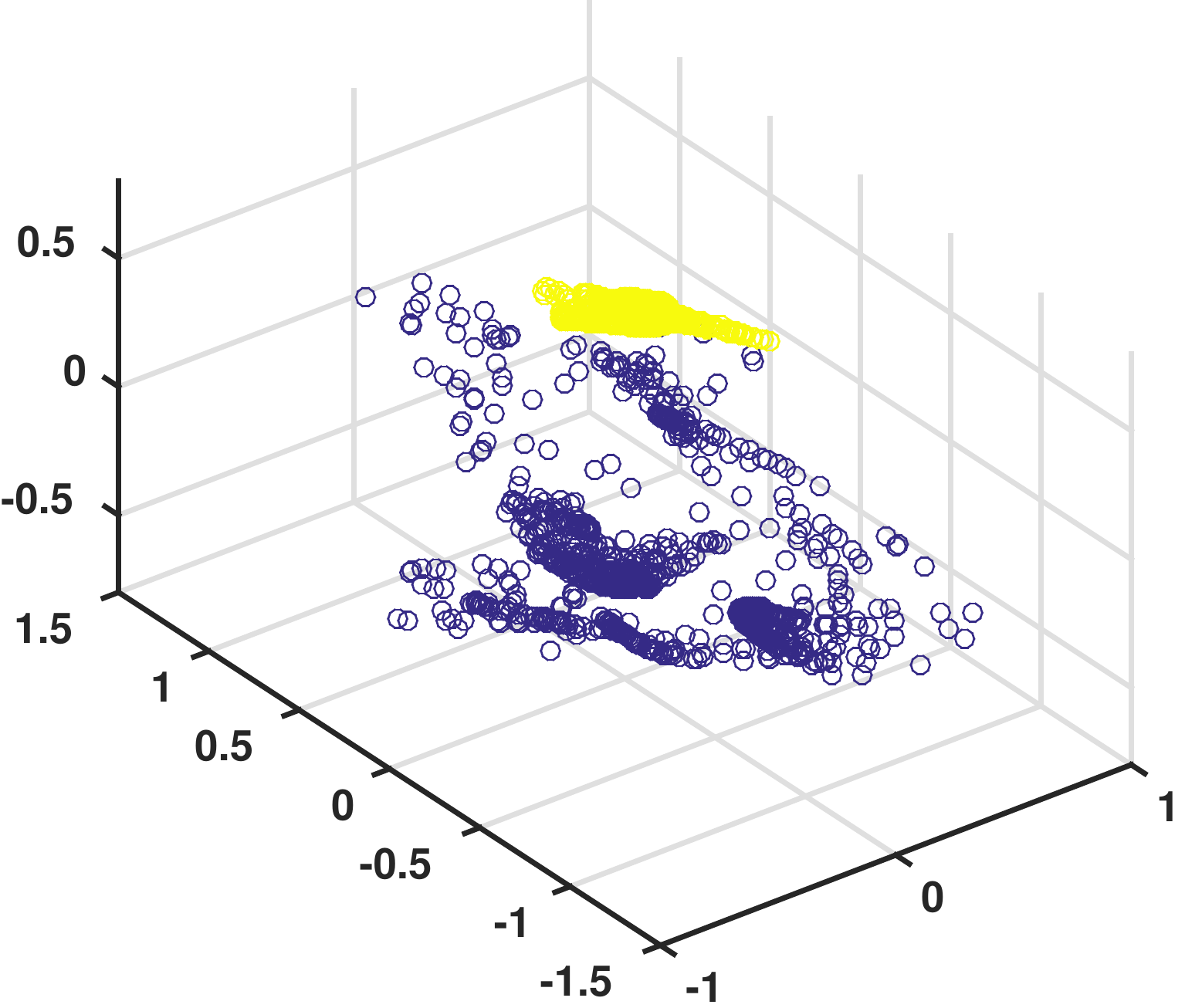}}
	\caption{Transformed features using GDT with $\lambda=0.4$.}
	\label{fig:GDTlambda0.4}
\end{figure}

We want to learn a mapping $f(\x)$ that transforms the low-level feature $\x$ to a more discriminative one.
In this paper, we are particularly interested in non-linear transforms $f(\cdot)$ implemented as a deep neural network.
However, the method and theory we develop are general in the sense that any other family of $f(\cdot)$ can be adopted as well.
In this example, $f(\cdot)$ is implemented as a 2-layer fully connected neural network with $\tanh$ as the squash function,  $f(\cdot)$ taking the form
\ben
f(\x)=\tanh(\A_2\tanh(\A_1\x)),
\een
where $\A_1,\A_2\in\R^{d\times d}$ are the linear coefficients in those two layers.

\noindent \textbf{Metric learning formulation.}  The general goal of metric learning is to ensure that, after the transform, the distance between intra-class points is small, while the inter-class distance is large.
The Euclidean distance is a common choice of metric; however, empirical results~\cite{Cao2013,CSML2010} have shown that the cosine distance outperforms Euclidean distance on certain tasks such as face verification. Moreover, cosine distance is bounded and easier for us to design the loss function.
We therefore adopt the cosine distance in this paper, and propose  the  metric learning formulation
\ben
	\min_{\A_1,\A_2}\sum_{i\neq j} \left(\frac{f(\x_i)^\top f(\x_j)}{\|f(\x_i)\|\cdot\|f(\x_j)\|}-t_{i,j}\right)^2,
	\label{eq:naiveForm1}
\een
where the indicator
\[
t_{i,j}=\left\{ \begin{array}{ll} 1 & \mbox{if } \x_i, \x_j \in\mbox{ same class,}\\ -1 &\mbox{otherwise.} \end{array}\right.
\]
Notice that $\frac{f(\x_i)^\top f(\x_j)}{\|f(\x_i)\|\cdot\|f(\x_j)\|}\in[-1,1]$ is the cosine of the angle between the transformed features $f(\x_i)$ and $f(\x_j)$. The objective of~\eqref{eq:naiveForm1} is to encourage the intra-class angle to be  close to 0, and the inter-class ones to be as separated as  $\pi$.
	We use back-propagation to optimize the parameters, $\A_1$ and $\A_2$, as explained later.

Fig.~\ref{fig:naiveForm1_train} visualizes the transformed training samples by the learned $f(\cdot)$. The learned transform significantly pulls apart the two classes and reduces the variations within each individual class.  We then apply the learned $f(\cdot)$ to the testing samples (fig.~\ref{fig:naiveForm1_test}). However, we observe that the two classes are not well separated, raising our concerns about the robustness of the pure metric learning formulation in ~\eqref{eq:naiveForm1}.

\noindent \textbf{Classification formulation.} Now let us consider a different objective function, where we encourage the intra-class angles to be preserved after the transformation. This new objective has a unified formulation as \eqref{eq:naiveForm1}, but now  the indicator becomes
\[
t_{i,j}=\left\{ \begin{array}{ll} \frac{\x_i^\top \x_j}{\|\x_i\|\|\x_j\|} & \mbox{if } \x_i, \x_j \in\mbox{ same class,}\\ -1 &\mbox{otherwise.} \end{array}\right.
\]

We denote this objective function as a classification formulation, as it shares similar attributes to the classification objective commonly optimized for a deep network \cite{deepID,deepface}. Explicit constraints are imposed to separate different classes, e.g., $t_{i,j}=-1$ for negative pairs here, but only weak constraints are used to assign similar representation to the same class.
This classification formulation is less ambitious than the metric learning formulation, as it does not require the variance in the same class being reduced. $f(\cdot)$ is implemented as the 2-layer neural network as described before, and  optimized through back-propagation.

The transformed training and testing samples are visualized in Fig.~\ref{fig:naiveForm2}.
Comparing Fig.~\ref{fig:naiveForm1} and Fig.~\ref{fig:naiveForm2}, we observe that although our metric learning formulation works well on the training data, it does not well discriminate the two classes on testing data, \textit{i.e.}, it has a big generalization error.
In contrast, following the classification formulation, the intra-class variance is not reduced, yet the deterioration from training to testing is not so significant.
In other words, while the  metric learning formulation is too optimistic about the discrimination we can achieve, the classification formulation is more robust but conservative.

\subsection{Proposed formulation and algorithm}

  We introduce now a geometry-aware deep transform.
   We  use $f_\balpha(\cdot)$ to denote the feature transform,
   to emphasize that  $\balpha$ are parameters to be learned, e.g.,
filters in a neural network ($\A_1, \A_2$ in the previous section).
   $f_\balpha$ can be a linear function or a non-linear function implemented by a neural network.

We formulate the transformation learning problem as:
\ben
	\min_{\balpha}{1\over2}\sum_{i\neq j} \left(\frac{f_\balpha(\x_i)^\top f_\balpha(\x_j)}{\|f_\balpha(\x_i)\|\cdot\|f_\balpha(\x_j)\|}-t_{i,j}\right)^2,
	\label{eq:obj}
\een
where the indicator
\[
t_{i,j}=\left\{ \begin{array}{ll}\lambda+(1-\lambda)\frac{\x_i^\top \x_j}{\|\x_i\|\|\x_j\|} & \mbox{if } \x_i, \x_j \in \mbox{ same class,}\\ -1 &\mbox{otherwise.} \end{array}\right.
\]
and $\lambda\in[0,1]$.
We denote formulation~\eqref{eq:obj} as Geometry-aware Deep Transform (GDT).
The GDT objective is a weighted combination of the two pedagogic formulations discussed above. We can understand it as regularizing the metric learning formulation using the classification one.

We use gradient descent (back-propagation if $f_\balpha(\cdot)$ is a deep neural network) to solve for the $\balpha$ in \eqref{eq:obj}. In particular, let us denote the objective in \eqref{eq:obj} as $J$ and define
\ben\begin{aligned}
f_{\balpha}(\x_i) &\triangleq& \y_i,\\
\frac{f_{\balpha}^\top(\x_i)f_{\balpha}(\x_i)}{\|f_\balpha(\x_i)\|\cdot\|f_\balpha(\x_j)\|} &\triangleq& C_{i,j}.
\end{aligned}\een
Then we have
\ben
\begin{aligned}
\frac{\partial J}{\partial \y_i}=& {1\over\|\y_i\|}
\sum_{j\neq i} \left(C_{i,j}-t_{i,j}\right)
\left[{\y_j\over\|\y_j\|}-C_{i,j}\cdot{\y_i\over\|\y_i\|}  \right].
\end{aligned}
\label{eq:dJdy}
\een
${\partial J\over\partial \y_j}$ can be calculated in the same manner.
Then we back-propagate this gradient through the network to update all the parameters.
More specifically, we denote $\balpha^{(k)}$ as the filter weights and bias in the $k$-th ($1\leq k\leq K$) layer.
And $\x_i^{(k)}$ as the output of the $k$-th layer excited by the input $\x_i^{(k-1)}$
(therefore $\y_i=\x_i^{(K)}$ and $\x_i=\x_i^{(0)}$).
Then,
\ben
\begin{aligned}
	{\partial J\over\partial\balpha^{(K)}} &= \sum_i{\partial J\over\partial \y_i}\cdot {\partial \y_i \over \partial\balpha^{(K)}},\\
	{\partial J\over\partial\balpha^{(k)}} &= \sum_i{\partial J\over\partial \x_i^{(k+1)}}\cdot{\partial \x_i^{(k+1)}\over\partial \x_i^{(k)}}\cdot {\partial \x_i^{(k)} \over \partial\balpha^{(k)}}, 1\leq k\leq K-1.
\end{aligned}
\label{eq:dJdalpha}
\een
An overview of the GDT algorithm is summarized in Algorithm~\ref{algbox:GDT}.
\begin{algorithm}[h!]
	\caption{Gradient descent solver for GDT}
	\begin{algorithmic}
		\REQUIRE $\lambda\in[0,1]$, training pairs $\{(\x_i,\x_j,\ell_{i,j})\}$,\\
				a defined $K$-layer network ($f_\balpha(\cdot)$ family), stepsize $\gamma$
		\ENSURE $\balpha$
		\WHILE{stable objective not achieved}
		\STATE Compute $\y_i=f_\balpha(\x_i)$ by a forward pass
		\STATE Compute objective $J$
		\STATE Compute $\partial J\over \partial \y_i$ as Eq.~\eqref{eq:dJdy}
		\FOR {$k=K$ down to $1$}
			\STATE Compute $\partial J\over\partial\balpha^{(k)}$ as Eq.~\eqref{eq:dJdalpha}
			\STATE $\balpha^{(k)}\gets \balpha^{(k)}-\gamma {\partial J\over\partial\balpha^{(k)}}$
		\ENDFOR
		\ENDWHILE
	\end{algorithmic}
	\label{algbox:GDT}
\end{algorithm}

For an illustration of Algorithm~\ref{algbox:GDT}, we apply it with $\lambda=0.4$ to the illustrative example above. The transformed training and testing samples are shown as Fig.~\ref{fig:GDTlambda0.4}. Compared with the two pedagogic formulations (equivalent to $\lambda=1$ and $0$ in GDT respectively), this $\lambda=0.4$ case is balancing between discriminability and robustness.
Before more detailed experimental analysis are shown in Section~\ref{sec:expr},
we now provide theoretical insights to support our robustness claim.

\subsection{$(K,\epsilon)$-robustness}
\label{sec:theory}

GDT regularizes discriminative transform learning with  intra-class structure preservation. 
In this section, we formally show that a local isometry regularization induces robustness.
In the following, we assume a general objective that works with distance metrics of pairs of transformed features.

Let the low-level feature space be $\X$, and the class label set be $\Y=\{1,\dots,L\}$, where $L$ is the number of classes.
$\Z=\X\times\Y$ is the set of low-level features and their corresponding labels.
The training set is
\[
\T=\{(\x_1,y_1),\dots,(\x_n,y_n)\}\triangleq\{\z_1,\dots,\z_n\}\in\Z^n,
\]
which consists of $n$ i.i.d. samples drawn from an unknown distribution $\D$ defined on $\Z$.
The feature mapping is $f_\balpha(\x): \X\mapsto \F$, where $\F$ is the transformed feature space.

Denote $\rho$ as a metric endowed with $\X$ and $\F$.
Define pair label $\ell_{i,j}=1$ if $y_i=y_j$, and $-1$ otherwise.
We may adopt a loss function $g(\rho(f_\balpha(\x_i),f_\balpha(\x_j)),\ell_{i,j})$ that encourages $\rho(f_\balpha(\x_i),f_\balpha(\x_j))$ to be small (big) if $\ell_{i,j}=1$ ($-1$).
We require the Lipschtiz constant of $g(\cdot,1)$ and $g(\cdot,-1)$ to be upper bounded by $A$ ($0<A<\infty$).
Examples of such $g$ include the hinge loss
\ben
\max(-\ell_{i,j}(1-\rho(f_\balpha(\x_i),f_\balpha(\x_j))),0),
\label{eq:ml1}
\een
as well as its smoothed version
\ben
\log(1+e^{-\ell_{i,j}(1-\rho(f_\balpha(\x_i),f_\balpha(\x_j)))}),
\label{eq:ml2}
\een
both of which are commonly adopted in the metric learning literature~\cite{DML}.
In GDT formulation~\eqref{eq:obj}, the quadratic loss has bounded Lipschtiz w.r.t. the cosine distance $C_{i,j}$ as well.
In the following, we denote
\[
g(\rho(f_\balpha(\x_i),f_\balpha(\x_j)),\ell_{i,j})\triangleq h_\balpha(\z_i,\z_j)
\]
for short.

The empirical loss on the training set (associated with parameter $\balpha$) is
\ben
 R_{emp}(\balpha) \triangleq {2\over n(n-1)} \sum_{\stackrel{i,j=1}{i\neq j}}^n h_\balpha(\z_i,\z_j).
\een
And the expected loss is
\ben\begin{aligned}
	R(\balpha) &\triangleq& \E_{\z_1^\prime,\z_2^\prime\sim\D} \left[h_\balpha(\z_1^\prime,\z_2^\prime)\right].\\
\end{aligned}\een
The algorithm is a program that seeks
\ben
\balpha_\T \triangleq \argmin_\balpha R_{emp}(\balpha),
\label{eq:algo}
\een
which minimizes the empirical loss on the training set $\T$.
A metric learning type formulation, including our GDT, falls in the category of algorithm~\eqref{eq:algo}.
The quantity $R_{emp}(\balpha_\T)-R(\balpha_\T)$ is called the algorithm's generalization error.
Smaller generalization error implies robustness.

The work \cite{Xu2012} proposes a notion called $(K,\epsilon)$-robustness, and 
\cite{Bellet2014} extends the definition of robustness to algorithms like \eqref{eq:algo} that work on pairs of samples. It also shows that $(K,\epsilon)$-robust algorithms have generalization error bounded as
\ben
R_{emp}(\balpha_\T)-R(\balpha_\T)\leq \epsilon+O\left(\sqrt{K\over n}\right).
\label{eq:bound}
\een
We now rephrase the definition of $(K,\epsilon)$-robustness in~\cite{Bellet2014}:
\begin{definition}
	The algorithm~\eqref{eq:algo} is $(K,\epsilon)$-robust if $\Z$ can be partitioned into $K$ disjoint set, $\{C_k\}_{k=1}^K$, such that for all $\T\in\Z^n$, the learned $\balpha_\T$ satisfies:\\
	$\forall \z_i, \z_j \in\T$ where $i\neq j$,\\
	$\forall \z_1^\prime,\z_2^\prime\in\Z$, \\
	If $\z_i,\z_1^\prime\in C_p$, and $\z_j,\z_2^\prime\in C_q$ for any $p,q\in\{1,\dots,K\}$,
	then
	\[
		\left|h_{\balpha_\T}(\z_i,\z_j)-h_{\balpha_\T}(\z_1^\prime,\z_2^\prime)\right|\leq \epsilon.
	\]
\label{def1}
\end{definition}
According to Definition~\ref{def1}, the $(K,\epsilon)$-robustness essentially requires that with the learned $\balpha_\T$,
a testing pair $(\z_1^\prime, \z_2^\prime)$ incurs a similar loss with any training pair $(\z_i, \z_j)$ that is in the same subset (in a pair-wise sense).
And according to the generalization error bound~\eqref{eq:bound},
the smaller $\epsilon$ is, the smaller the generalization error tends to be; therefore the more robust the algorithm is.

Before presenting our theory, we need to introduce the covering number, defined as follows:
\begin{definition}
	For a metric space $(\S,\rho)$, we say that $\hat\S\subset \S$ is a $\gamma$-cover of $\S$,
	if $\forall \s\in\S$, $\exists\hat\s\in\hat\S$ such that $\rho(\s,\hat\s)\leq\gamma$.
	The $\gamma$-covering number of $\S$ is
	\[
		\N_{\gamma}(\S,\rho) = \min\{|\hat\S|:\hat\S \mbox{ is a }\gamma\mbox{-cover of }\S\}
	\]
\end{definition}
\begin{remark}
	The covering number describes how many balls (in $\rho$ metric sense) we need to ``cover" a space. 
	Feature space of certain property, e.g., Gaussian distributed, sparsely representable~\cite{coverNumberGauss}, has certain covering number.
	The more complex the feature space is, the more balls we need to cover it.
	In a word, covering number reflects the geometry of the set $\S$.
	In particular, we notice that the set $\S$ with covering number $\N_{\gamma/2}(\S,\rho)$ can be partitioned into $\N_{\gamma/2}(\S,\rho)$ disjoint subsets, such that any two points within the same subset are separated by no more than $\gamma$.
	\label{remark:coverNum}
\end{remark}

\begin{lemma}
$\Z$ can be partitioned into $L\N_{\gamma/2}(\X,\rho)$ subsets, denoted as $\Z_1,\dots,\Z_{L\N_{\gamma/2}(\X,\rho)}$,
such that for all $\z_1\triangleq(\x_1,y_1), \z_2\triangleq(\x_2,y_2)$ belonging to any one of these subsets, $y_1=y_2$ and $\rho(\x_1,\x_2)\leq \gamma$.
\label{lemma1}
\end{lemma}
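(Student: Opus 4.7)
The plan is to build the partition of $\Z=\X\times\Y$ by taking a Cartesian product of a geometric partition of $\X$ with the discrete set of labels $\Y$, and then count. Since $|\Y|=L$, getting the target cardinality $L\,\N_{\gamma/2}(\X,\rho)$ amounts to constructing a partition of $\X$ into exactly $\N_{\gamma/2}(\X,\rho)$ pieces, each of diameter at most $\gamma$.

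First I would make Remark~\ref{remark:coverNum} precise. Let $N=\N_{\gamma/2}(\X,\rho)$ and let $\hat\X=\{\hat\x_1,\dots,\hat\x_N\}$ be a minimum $\gamma/2$-cover of $\X$. Define
\begin{equation*}
\X_k = \bigl\{\x\in\X : k = \min\{\,i : \rho(\x,\hat\x_i)\leq\gamma/2\,\}\bigr\},\qquad k=1,\dots,N,
\end{equation*}
i.e., assign each $\x$ to the smallest-indexed cover point within $\gamma/2$. By definition of a cover every $\x$ is covered by at least one $\hat\x_i$, so the $\X_k$ are well-defined, and the tie-breaking rule makes them pairwise disjoint with $\bigcup_k \X_k = \X$. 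For any two points $\x,\x'\in\X_k$ the triangle inequality gives $\rho(\x,\x')\leq\rho(\x,\hat\x_k)+\rho(\hat\x_k,\x')\leq\gamma/2+\gamma/2=\gamma$, which is exactly what Remark~\ref{remark:coverNum} asserted.

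Next I would lift this partition of $\X$ to $\Z$ by crossing with labels. For each $(k,y)\in\{1,\dots,N\}\times\{1,\dots,L\}$, set
\begin{equation*}
\Z_{k,y} = \X_k \times \{y\} \subset \Z.
\end{equation*}
These $LN = L\,\N_{\gamma/2}(\X,\rho)$ sets are clearly disjoint (they differ either in the $\X$-piece or in the label coordinate) and their union is $\X\times\Y=\Z$. For any two elements $\z_1=(\x_1,y_1)$ and $\z_2=(\x_2,y_2)$ lying in the same $\Z_{k,y}$, we have $y_1=y_2=y$ and $\x_1,\x_2\in\X_k$, so $\rho(\x_1,\x_2)\leq\gamma$ by the previous step. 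Re-indexing the $\Z_{k,y}$ as $\Z_1,\dots,\Z_{LN}$ yields the stated partition.

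There is essentially no hard step here; the only content is the standard argument that a $\gamma/2$-cover induces a partition into sets of diameter at most $\gamma$ via a nearest-cover-point rule, and then a trivial product with the finite label set. The proof is short and purely combinatorial/metric, and does not require any property of the learned transform $f_\balpha$.
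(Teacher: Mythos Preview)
Your proof is correct and follows essentially the same approach as the paper's own proof: partition $\X$ into $\N_{\gamma/2}(\X,\rho)$ pieces of diameter at most $\gamma$ using a $\gamma/2$-cover (the content of Remark~\ref{remark:coverNum}), then take the product with the $L$ labels. The paper's version is just terser, invoking Remark~\ref{remark:coverNum} directly rather than spelling out the nearest-cover-point construction and triangle-inequality step as you do.
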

\begin{proof}
	As noticed immediately after Definition~\ref{def1}, we can partition $\X$ into $\N_{\gamma/2}(\X,\rho)$ disjoint subsets, each with diameter no bigger than $\gamma$.
	Then we can partition $\Z=\X\times\Y$ into $L\N_{\gamma/2}(\X,\rho)$ disjoint subsets, such that any two samples $(\x_1,y_1), (\x_2,y_2)$ in any one of these subsets have $y_1=y_2$ and $\rho(\x_1,\x_2)\leq\gamma$.
\end{proof}
Lemma~\ref{lemma1} also implies a partition of $\X$, denoted as $\X_1,\dots,\X_{L\N_{\gamma/2}(\X,\rho)}$ such that any $\x_i,\x_j$ from the same subset have $\rho(\x_i,\x_j)\leq\gamma$ and share the same label.
\begin{theorem}
	If $f_\balpha(\x)$ is a $\delta$-isometry (\textit{i.e.}, distance distorted by at most $\delta$ after the transform) within each of $\X_1,\dots,\X_{L\N_{\gamma/2}(\X,\rho)}$ as described above, then an algorithm in the category of \eqref{eq:algo} is $(L\N_{\gamma/2}(\X,\rho),2A(\gamma+\delta))$-robust.
	\label{theorem1}
\end{theorem}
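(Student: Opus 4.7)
The plan is to take the partition supplied by Lemma~\ref{lemma1}, namely $\Z_1,\dots,\Z_K$ with $K=L\N_{\gamma/2}(\X,\rho)$, as the witness partition for the definition of $(K,\epsilon)$-robustness. So I fix an arbitrary training sample $\T$, the resulting $\balpha_\T$, and two pairs $(\z_i,\z_j)$ with $\z_i,\z_j\in\T$ and $(\z_1^\prime,\z_2^\prime)$ drawn from $\Z$, subject to the co-cluster condition $\z_i,\z_1^\prime\in\Z_p$, $\z_j,\z_2^\prime\in\Z_q$. The goal is to show $|h_{\balpha_\T}(\z_i,\z_j)-h_{\balpha_\T}(\z_1^\prime,\z_2^\prime)|\leq 2A(\gamma+\delta)$.

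First I would observe that the label variable in $h$ is stable across the pair swap: by Lemma~\ref{lemma1}, $\z_i$ and $\z_1^\prime$ share a label, and so do $\z_j$ and $\z_2^\prime$. Consequently $\ell_{i,j}=\ell_{1^\prime,2^\prime}$, so both $h_{\balpha_\T}$ evaluations feed into the same branch $g(\cdot,+1)$ or $g(\cdot,-1)$. Using the Lipschitz bound $A$ on that branch,
\[
\left|h_{\balpha_\T}(\z_i,\z_j)-h_{\balpha_\T}(\z_1^\prime,\z_2^\prime)\right|
\leq A\cdot\bigl|\rho(f_{\balpha_\T}(\x_i),f_{\balpha_\T}(\x_j))-\rho(f_{\balpha_\T}(\x_1^\prime),f_{\balpha_\T}(\x_2^\prime))\bigr|.
\]

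Next I would reduce the difference of two metric values to distances between ``corresponding'' points via two applications of the reverse triangle inequality:
\[
\bigl|\rho(f_{\balpha_\T}(\x_i),f_{\balpha_\T}(\x_j))-\rho(f_{\balpha_\T}(\x_1^\prime),f_{\balpha_\T}(\x_2^\prime))\bigr|
\leq \rho(f_{\balpha_\T}(\x_i),f_{\balpha_\T}(\x_1^\prime))+\rho(f_{\balpha_\T}(\x_j),f_{\balpha_\T}(\x_2^\prime)).
\]
Now I invoke the $\delta$-isometry hypothesis on each $\X_p$: because $\x_i,\x_1^\prime\in\X_p$ with $\rho(\x_i,\x_1^\prime)\leq\gamma$, we get $\rho(f_{\balpha_\T}(\x_i),f_{\balpha_\T}(\x_1^\prime))\leq\gamma+\delta$, and analogously for the $q$-indexed pair. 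Summing yields $2(\gamma+\delta)$, and multiplying by the Lipschitz constant $A$ finishes the bound.

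The routine but delicate step is bookkeeping the partition indices: one has to verify that the cover of $\X$ together with the finite label set yields the stated $K=L\N_{\gamma/2}(\X,\rho)$ and that both the label-equality and the within-cluster diameter bound $\gamma$ are simultaneously available; fortunately Lemma~\ref{lemma1} packages exactly this, so the only substantive ingredients are the Lipschitz assumption on $g$, the triangle inequality for $\rho$, and the local $\delta$-isometry assumption. I do not anticipate any real obstacle beyond being careful that the $\delta$-isometry is applied only within a single $\X_p$ (respectively $\X_q$), which is guaranteed by the co-cluster requirement in Definition~\ref{def1}.
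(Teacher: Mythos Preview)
Your proposal is correct and follows essentially the same route as the paper: use the partition from Lemma~\ref{lemma1}, note that co-clustered samples share labels so the same Lipschitz branch of $g$ applies, use the reverse triangle inequality on $\rho$ to reduce to $\rho(f_{\balpha_\T}(\x_i),f_{\balpha_\T}(\x_1^\prime))+\rho(f_{\balpha_\T}(\x_j),f_{\balpha_\T}(\x_2^\prime))$, and bound each term by $\gamma+\delta$ via the local $\delta$-isometry. The only cosmetic difference is ordering: the paper first bounds the metric difference and then applies the Lipschitz constant, whereas you apply Lipschitz first and then bound the metric difference.
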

\begin{proof}
	 The proof follows the definition of $(K,\epsilon)$-robustness.
	 We pick any training samples $\z_i,\z_j$ and testing samples $\z_1^\prime,\z_2^\prime$ such that $\z_i,\z_1^\prime\in\Z_p$ and $\z_j,\z_2^\prime\in\Z_q$ for some $p,q\in\{1,\dots,L\N_{\gamma/2}(\X,\rho)\}$.
	 Then
	 \[
	 \rho(\x_i,\x_1^\prime)\leq \gamma \mbox{ and } \rho(\x_j,\x_2^\prime)\leq \gamma.
	 \]
	 Notice that $\x_i,\x_1^\prime\in\X_p$ and $\x_j,\x_2^\prime\in\X_q$.
	 Therefore by the $\delta$-isometry definition,
	 \[
		 |\rho(f_\balpha(\x_i),f_\balpha(\x_1^\prime))-\rho(\x_i,\x_1^\prime)|\leq \delta,
	 \]
	 and
	 \[
	 |\rho(f_\balpha(\x_j),f_\balpha(\x_2^\prime))-\rho(\x_j,\x_2^\prime)|\leq \delta.
	 \]
	  Rearranging the above gives
	 \[
		 \rho(f_\balpha(\x_i),f_\balpha(\x_1^\prime))\leq \rho(\x_i,\x_1^\prime)+\delta\leq \gamma+\delta,
	 \]
	 and
	 \[
		 \rho(f_\balpha(\x_j),f_\balpha(\x_2^\prime))\leq \rho(\x_j,\x_2^\prime)+\delta\leq \gamma+\delta.
	 \]
	 We need to bound the difference between $\rho(f_\balpha(\x_i),f_\balpha(\x_j))$ and $\rho(f_\balpha(\x_1^\prime),f_\balpha(\x_2^\prime))$ so that we can further invoke the finite Lipschtiz assumption to bound the quantity $|h_\balpha(\z_i,\z_j)-h_\balpha(\z_1^\prime,\z_2^\prime)|$.
	 Specifically,
	 \[\begin{aligned}
	 &|\rho(f_\balpha(\x_i),f_\balpha(\x_j))-\rho(f_\balpha(\x_1^\prime),f_\balpha(\x_2^\prime))|\\
	 &\leq |\rho(f_\balpha(\x_i),f_\balpha(\x_j))-\rho(f_\balpha(\x_1^\prime),f_\balpha(\x_j))|\\
	 & \quad +|\rho(f_\balpha(\x_1^\prime),f_\balpha(\x_j))-\rho(f_\balpha(\x_1^\prime),f_\balpha(\x_2^\prime))|\\
	 &\leq \rho(f_\balpha(\x_i),f_\balpha(\x_1^\prime))+\rho(f_\balpha(\x_j),f_\balpha(\x_2^\prime))\\
	 &\leq 2(\gamma+\delta),
	 \end{aligned}
	 \]
	 where the second line follow from the triangle inequality, while the third line follows the definition of metric.
	 Notice that $y_i=y_1^\prime$ and $y_j=y_2^\prime$.
	 Therefore $|h_\balpha(\z_i,\z_j)-h_\balpha(\z_1^\prime,\z_2^\prime)|$ is either
	 \[
	 |g(\rho(f_\balpha(\x_i),f_\balpha(\x_j)),1)-g(\rho(f_\balpha(\x_1^\prime),f_\balpha(\x_2^\prime)),1)|,
	 \]
	 or
	 \[
	 |g(\rho(f_\balpha(\x_i),f_\balpha(\x_j)),-1)-g(\rho(f_\balpha(\x_1^\prime),f_\balpha(\x_2^\prime)),-1)|.
	 \]
	 Since the Lipschtiz constants  of $g(\cdot,1)$ and $g(\cdot,-1)$ are no bigger than $A$, we have
	 \[\begin{aligned}
		 & |h_\balpha(\z_i,\z_j)-h_\balpha(\z_1^\prime,\z_2^\prime)|\\
		 & \leq A|\rho(f_\balpha(\x_i),f_\balpha(\x_j))-\rho(f_\balpha(\x_1^\prime),f_\balpha(\x_2^\prime))|\\
		 & \leq2A(\gamma+\delta),
	 \end{aligned}\]
	 which concludes the proof.
\end{proof}
\begin{remark}
	Theorem~\ref{theorem1} tells us that the algorithm will be robust if we constrain the function $f_\balpha(\cdot)$ to be near isometric in local regions.
	And the robustness depends on how much of an isometry $f_\balpha(\cdot)$ is in the local regions.
	The local regions are jointly defined by the class labels and the covering number, which, as we described in remark~\ref{remark:coverNum}, depicts the geometry of the low-level feature space.
	Given that the algorithm is $(K,2(\gamma+\delta))$ robust, by Eq.~\eqref{eq:bound}, we can bound the generalization error of algorithms that belongs to the category of~\eqref{eq:algo} by
	\[
		R_{emp}(\balpha_\T)-R(\balpha_\T)\leq 2(\gamma+\delta)+O\left(\sqrt{K\over n}\right).
	\]
\end{remark}
\begin{remark}
	In practice, we may resort to a formulation like GDT to encourage the mapping $f_\balpha(\cdot)$ to be near isometry in the local regions. 
	We can understand GDT as with small $\delta$, resulting in small generalization error. This explains why GDT is more robust (Fig.~\ref{fig:naiveForm1} to \ref{fig:GDTlambda0.4}) than the metric learning formulation.
\end{remark}	
\begin{remark}
	In fact, GDT only partitions the $\X$ space into $L$ subsets, implicitly assuming a trivial covering number of 1.
	One could further partition within each classes, corresponding to a nontrivial covering number.
	However, this is at the cost of learning local neighborhoods within each class, which is beyond the scope of this paper.
\end{remark}

\section{Experiments}
\label{sec:expr}

We provided a formal analysis in Section~\ref{sec:theory} to support the proposed geometry-aware deep transform as a robust framework for optimizing a deep network. 
In this section, we further present an experimental evaluation of GDT demonstrating its power in producing discriminative and robust features for classification.
We compare GDT with two state-of-the-art deep learning objectives: DeepFace (DF)~\cite{deepface} and Deep Metric Learning (DML)~\cite{DML}.
As discussed before, DeepFace shares attributes with our pedagogic classification formulation, and DML is close to  our  pedagogic metric learning formulation.


\begin{figure*}[ht]
\center
	\subfloat[$R_{emp}$ and $\hat R$]{\label{fig:loss}\includegraphics[width=0.32\textwidth]{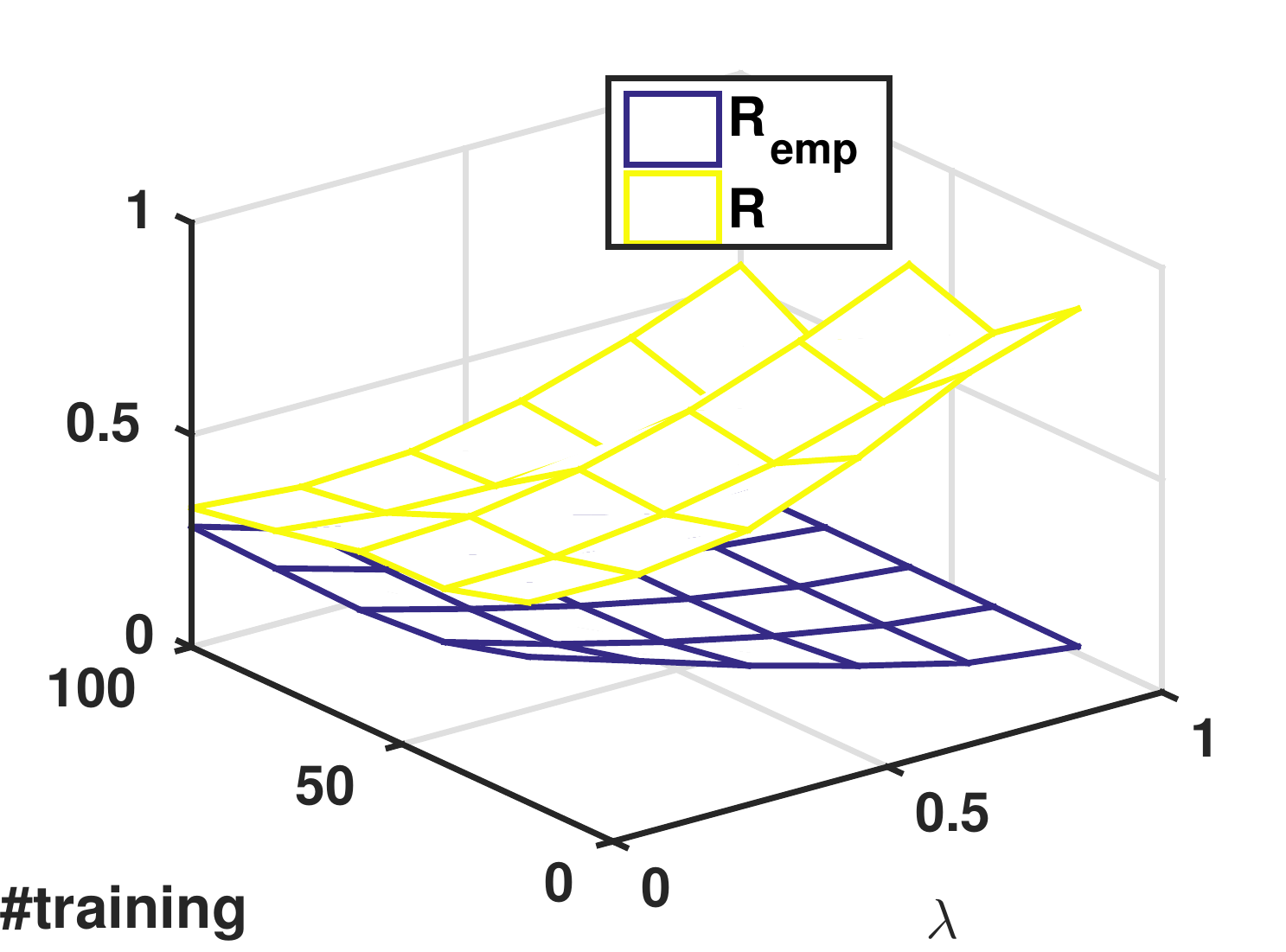}}
	\subfloat[Generalization error]{\label{fig:genErr}\includegraphics[width=0.32\textwidth]{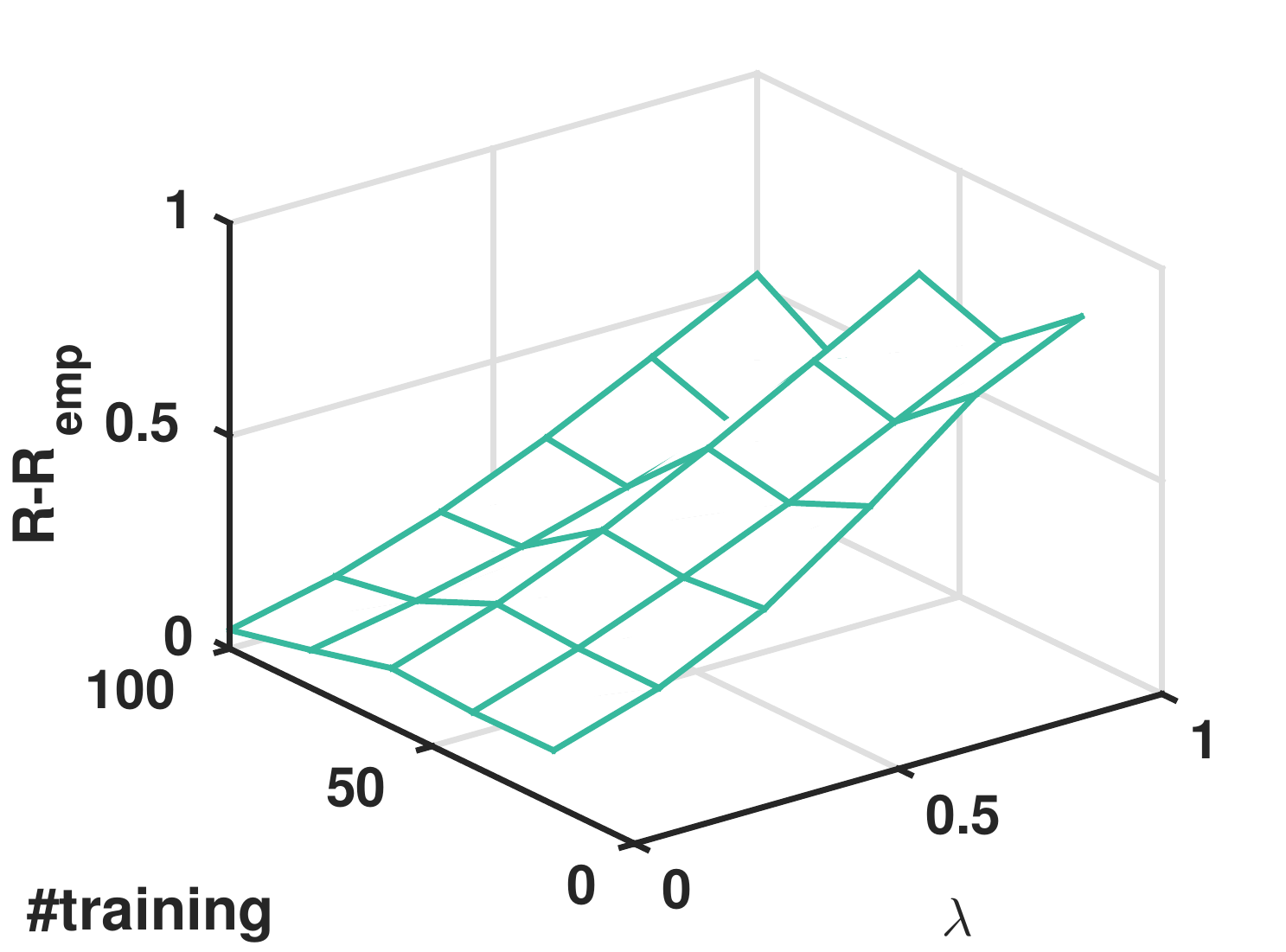}}
    \subfloat[1-NN classification accuracy]{\label{fig:toy_accu}\includegraphics[width=0.32\textwidth]{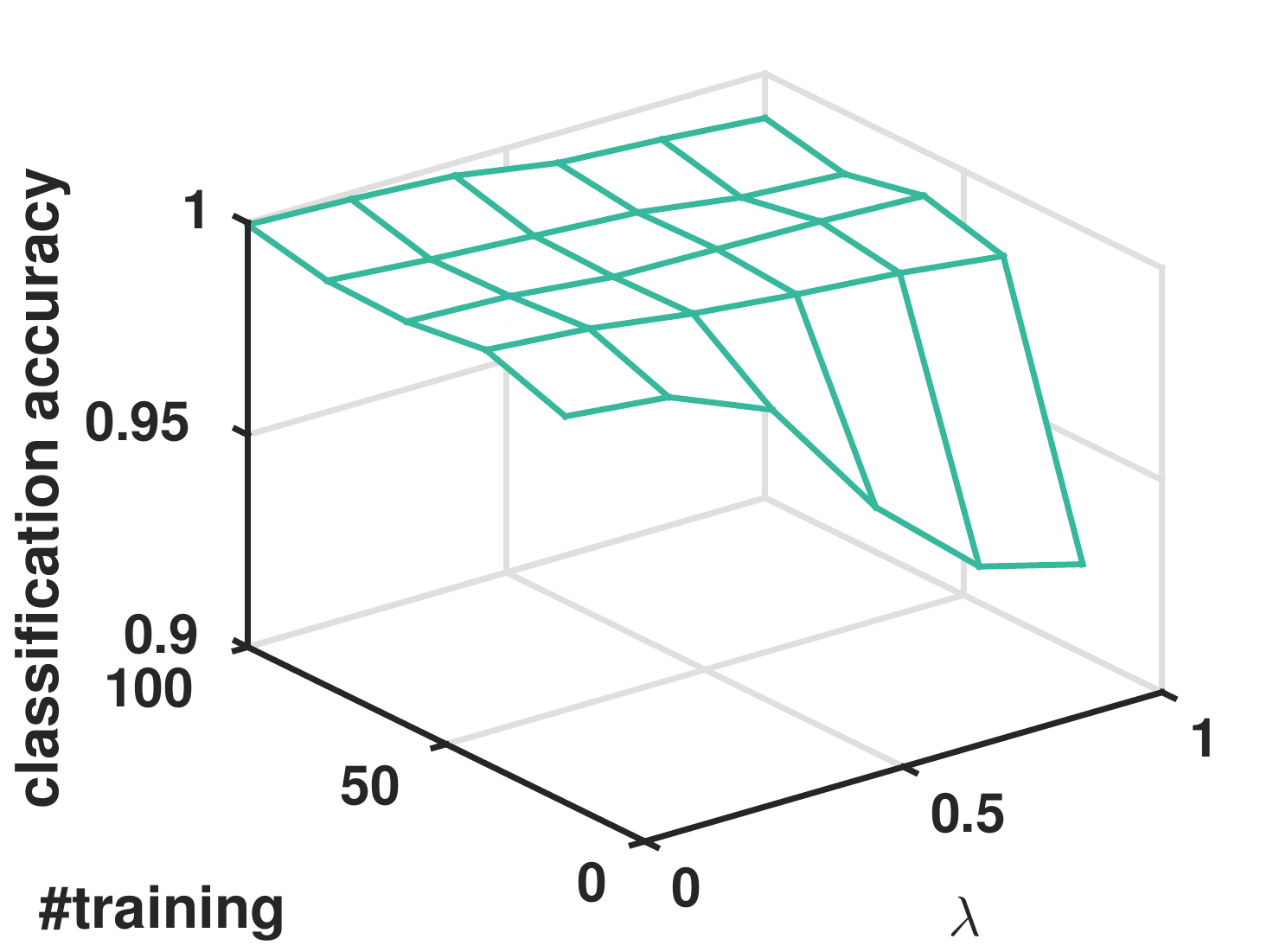}}
	\caption{Motivating example revisited. }
\end{figure*}

\subsection{Illustrative example revisited}
We provide here more experimental evaluation using the illustrative example in Section~\ref{sec:gdt}.
First we look at how $\lambda$ influences the performance.
The number of training samples per class ranges from 40 to 100. And $\lambda$ is varied in the $[0,1]$ interval.

Denote the training set as $\T$ and the testing set as $\V$.
In our case, the empirical loss on the training set is
\ben
R_{emp} = {1\over Z_\T}\sum_{\substack{i\neq j\\\x_i,\x_j\in\T}} \left(\frac{f_{\balpha_\T}(\x_i)^\top f_{\balpha_\T}(\x_j)}{\|f_{\balpha_\T}(\x_i)\|\cdot\|f_{\balpha_\T}(\x_j)\|}-\ell_{i,j}\right)^2,
\een
where $Z_\T$ is the  number of pairs constructed from the training set.
Note that the loss is not the objective in GDT formulation~\eqref{eq:obj} averaged over $Z_\T$;
 the objective of GDT incorporates an intra-class structure-preserving regularization, which should be excluded in evaluating the empirical loss.
The expected loss is empirically evaluated over the testing set, 
\ben
\hat R = {1\over Z_\V}\sum_{\substack{i\neq j\\\x_i,\x_j\in\V}} \left(\frac{f_{\balpha_\T}(\x_i)^\top f_{\balpha_\T}(\x_j)}{\|f_{\balpha_\T}(\x_i)\|\cdot\|f_{\balpha_\T}(\x_j)\|}-\ell_{i,j}\right)^2,
\een
where $Z_\V$ is the number of pairs constructed from the testing set.
Here we use the notation $\hat R$ to indicate that it is an empirical  estimate.

Fig.~\ref{fig:loss} shows  $R_{emp}$ and $\hat R$ for a variety of $\lambda$ and $|\T|$.
Note that the smaller $\lambda$ is, the more the structure-preserving regularization is emphasized.
We observe that  $R_{emp}$ is constantly lower than $\hat R$, indicating that $R_{emp}$ always tends to be optimistic.
As $|\T|$ increases, $\hat R$ decreases and $R_{emp}$ approaches $\hat R$.
Note that when $|\T|$ is small and $\lambda$ is big, $R_{emp}$ significantly underestimates $\hat R$.
Fig.~\ref{fig:genErr} shows an empirical estimate of the generalization error, $R_{emp}-\hat R$.
Fixing a particular $|\T|$, the generalization error decreases as $\lambda$ approaches zero, implying more robustness.

To see how the robustness influences classification, we apply a nearest neighbor (1-NN) classifier to the transformed testing data. The obtained classification accuracy is shown in Fig.~\ref{fig:toy_accu}. When the number of training samples per class is small, there is a steady increase in classification accuracy as $\lambda$ decreases, i.e., when more structure preservation is enforced;
and such increase becomes less obvious when the training set size increases.
The above observation clearly shows that, when only a small training set is given,  the robustness gained from the structure preservation dominates the classification performance.

As discussed before,  when $\lambda=0$, the objective function is optimized for classification by imposing
explicit constraints, $t_{i,j}=-1$ for negative pairs,  to separate different classes; however,
due to the structure preservation, weak constraints are used to enforce similar representation for
the same class.
This drawback cannot be overlooked for applications where it is critical to expect similar representations for the same class samples, such as face verification, and image retrieval.  In the next section, we use face verification to demonstrate a scenario where the balance between robustness and discrimination is preferred.

\subsection{MNIST}
The last section shows an extreme case where the best classification performance is achieved when $\lambda=0$. However, in general, $R$ takes minimum at a nontrivial $\lambda\in (0, 1)$, as illustrated in this section. We apply GDT to MNIST dataset. The $f_\balpha(\cdot)$ we adopted is a neural network made up of 3 convolutional layers. Between every two consecutive convolutional layer is a pooling layer. The original $28\times 28$ images are mapped to $256$ dimensional feature vectors. 

We vary $\lambda\in[0,1]$ and evaluate $R_{emp}$ on a small training set of size 500 (50 samples per class). $R$ is empirically estimated on testing set of size 10000. As shown in Fig.~\ref{fig:minist}, 
we observe that as $\lambda$ varies from 0 to 1, the empirical loss keeps decreasing (Fig.~\ref{minist_Remp}), implying increasing discrimination on training set. However, the generalization error keeps increasing (Fig.~\ref{minist_genErr}), implying decreasing robustness. Therefore, to achieve smallest $R$ (corresponding to best performance in testing set), we need to balance between discrimination and robustness. And in general, the $R$ takes minimum at some $\lambda\in(0,1)$ (Fig.~\ref{minist_R}).

\begin{figure*}[ht!]
	\subfloat[Empirical loss $R_{emp}$]{\label{minist_Remp}\includegraphics[width=0.24\textwidth]{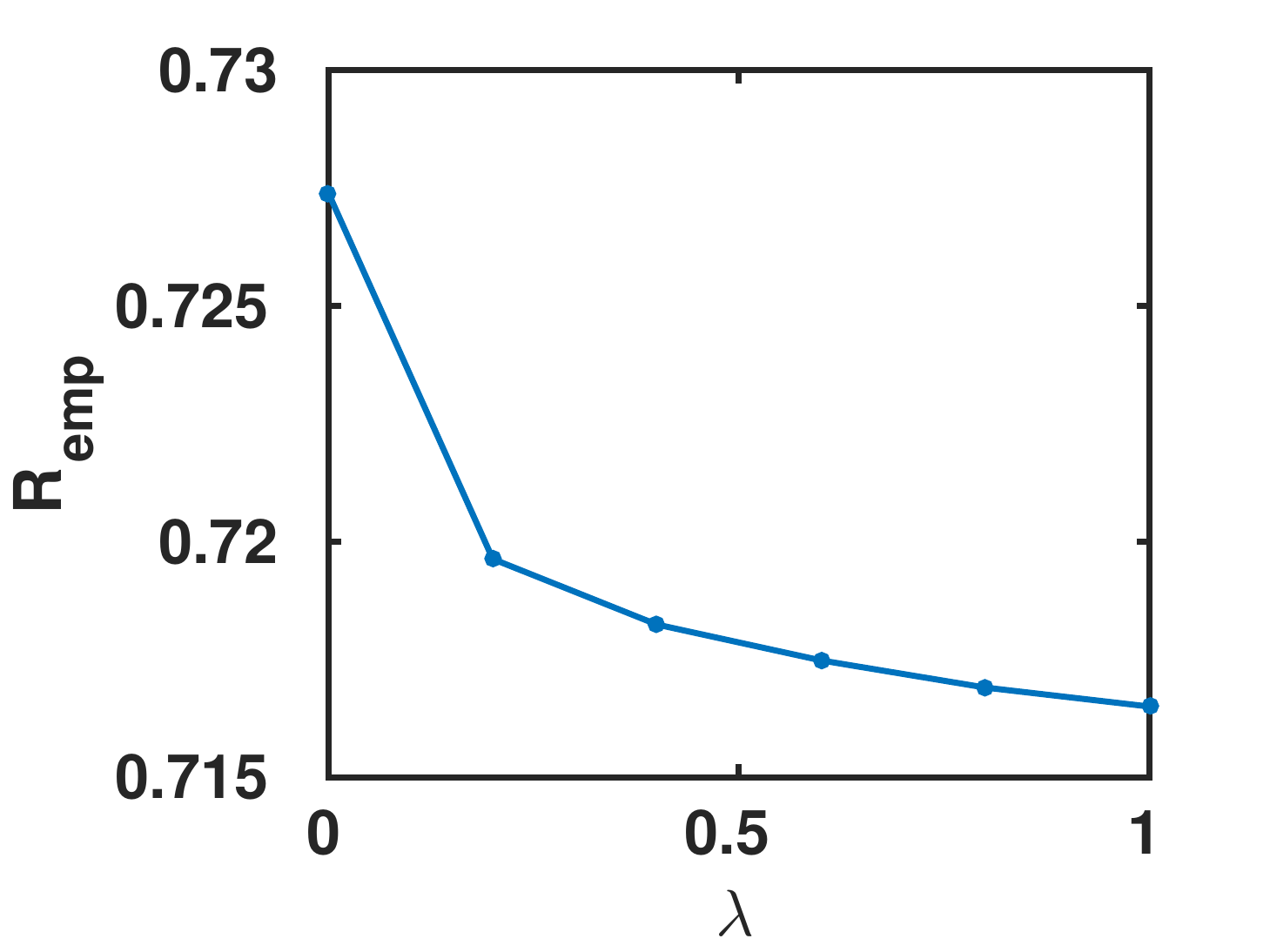}}~
	\subfloat[Generalization error]{\label{minist_genErr}\includegraphics[width=0.24\textwidth]{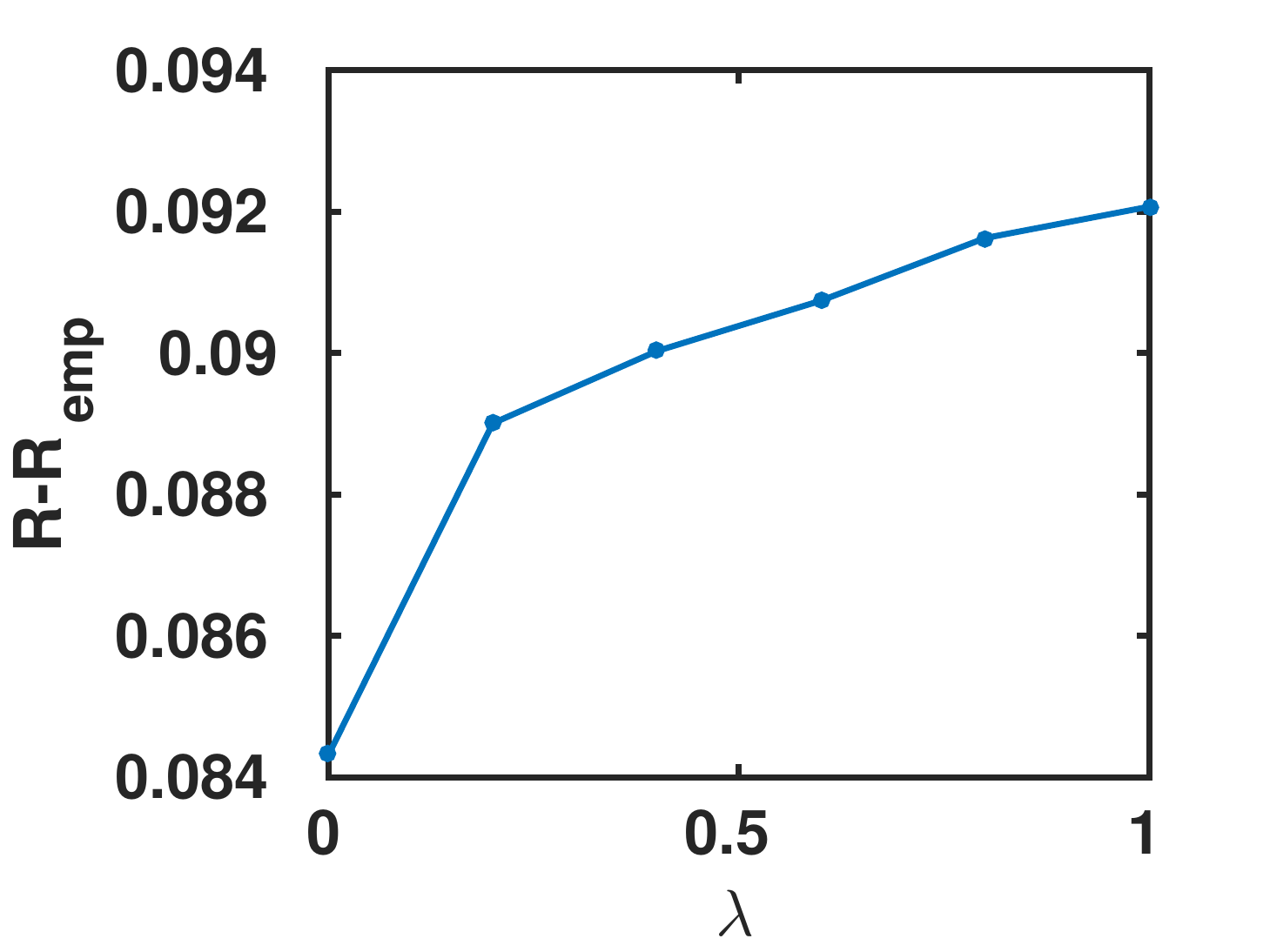}}~
	\subfloat[Test loss $R$]{\label{minist_R}\includegraphics[width=0.24\textwidth]{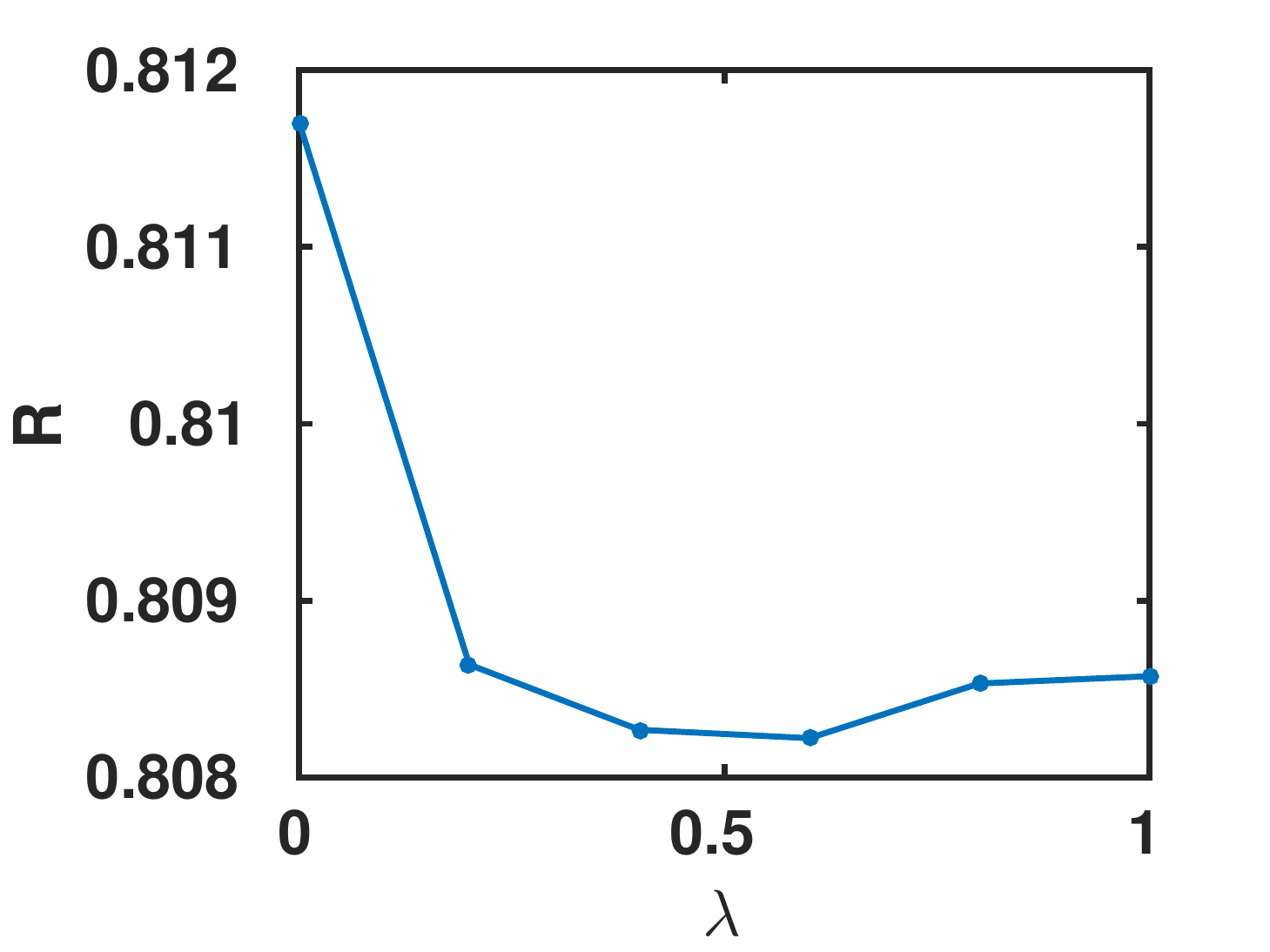}}
	\subfloat[Classification accuracy]{\label{mnist_accu}\includegraphics[width=0.24\textwidth]{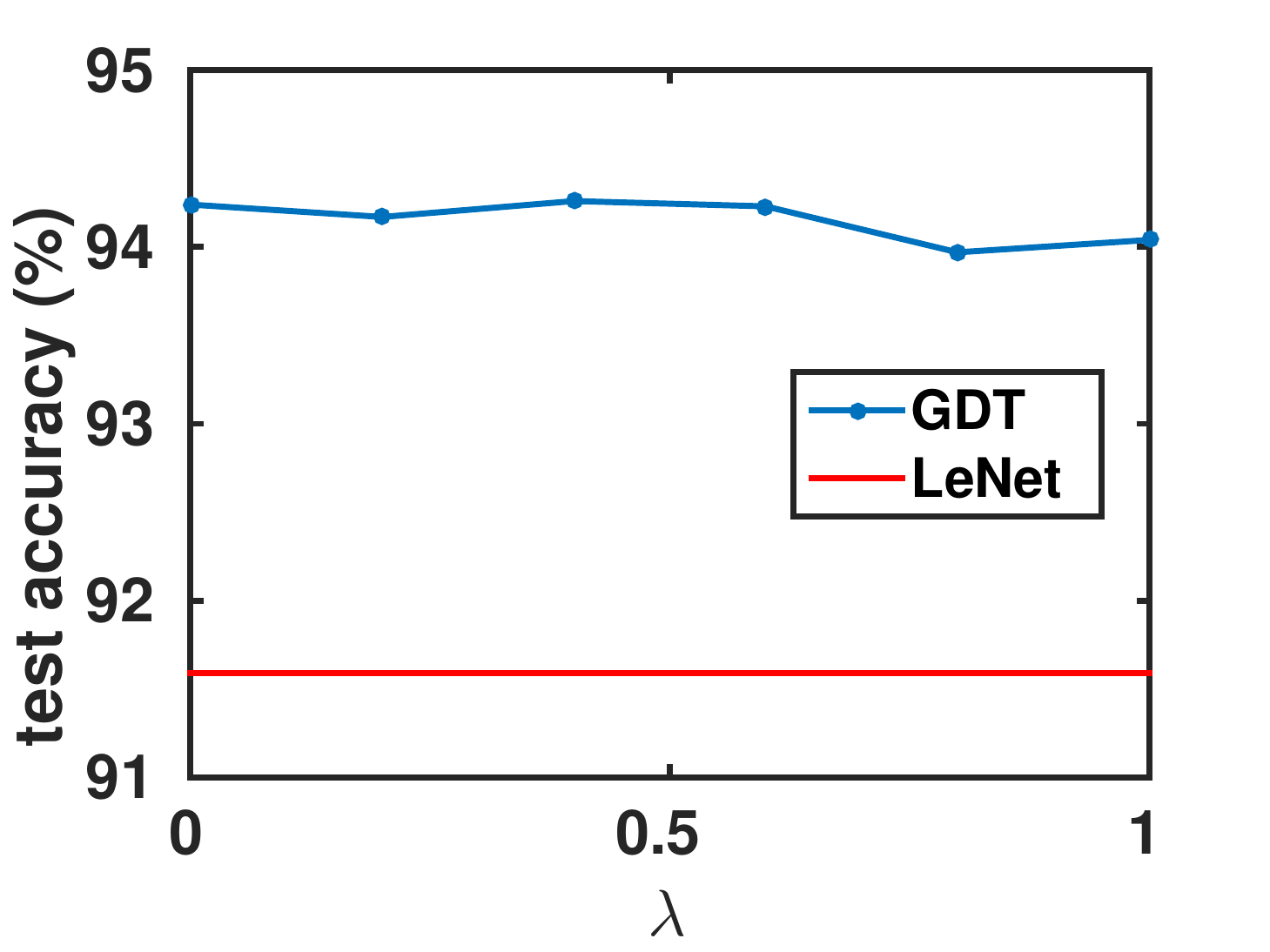}}
	\caption{GDT on MNIST with very small training set }
	\label{fig:minist}
\end{figure*}

As a comparison, we also run LeNet on the same training set. The LeNet's network structure is the same as the one adopted by GDT except that a fully connected layer and a softmax loss layer is added on the top. Fig.~\ref{mnist_accu} compares the classification accuracy of 1-nn on GDT features and that of LeNet. GDT's accuracy constantly outperforms LeNet and peaks around $\lambda=0.5$ where $R$ is the smallest.

\subsection{LFW}
\begin{figure*}[ht]
\center
	\subfloat[Comparison with state-of-art.]{\label{fig:lfw_compare}\includegraphics[width=0.42\textwidth]{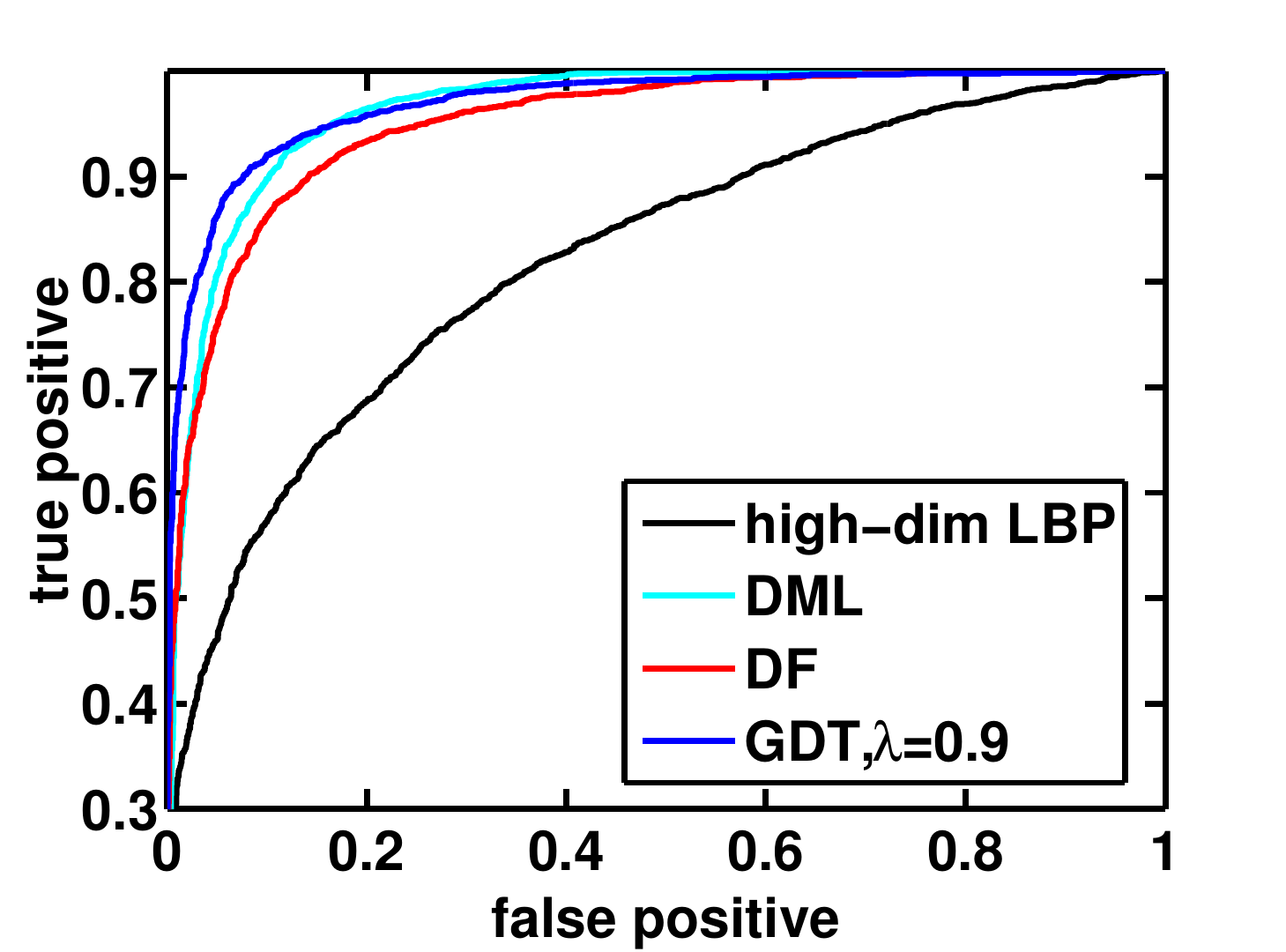}}
	\subfloat[Varying $\lambda$ in GDT.]{\label{fig:lfw_lambda}\includegraphics[width=0.42\textwidth]{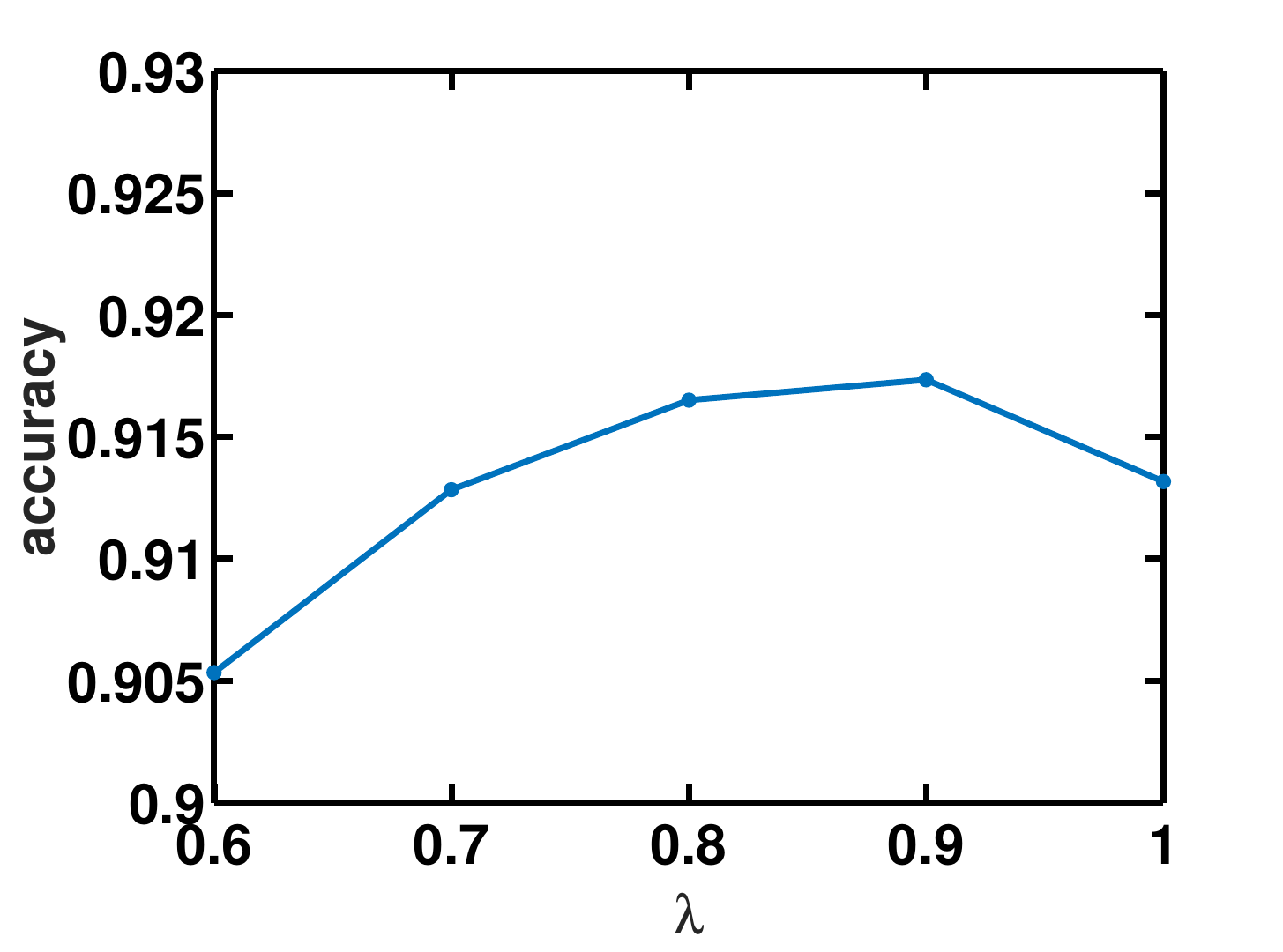}}
	\caption{Verification accuracy on LFW.}
\end{figure*}

We further validate the effectiveness of the geometry-aware deep transform by performing face verification on the challenging LFW benchmark dataset \cite{LFWref}.
Deep learning methods for face verification mostly use proprietary training data \cite{deepID2, deepID,deepface} and are therefore not reproducible.
We adopt the experimental setting from \cite{Bface2012}, and  train a deep network on the WDRef dataset~\cite{Bface2012}.
The WDRef dataset contains 2995 subjects and about 20 samples per subject, which is significantly smaller than a typical (proprietary) training set for deep learning, e.g., 4.4 million labeled faces from 4,030 people in \cite{deepface}, or 202,599 face images from 10, 177 subjects in \cite{deepID2}.
The goal of this paper is not to reproduce the success of deep learning in face verification \cite{DML, deepface}, but to
compare the proposed GDT with several popular objectives optimized in a deep network.
In our experiment, each face is described using a high dimensional LBP feature ~\cite{highDimChen2013} available at \cite{datalink}, which is reduced to dimension 5,000 using PCA.

We compare the proposed GDT with two state-of-the-art deep learning objectives: DeepFace (DF)~\cite{deepface}, and  Deep Metric Learning (DML)~\cite{DML}.
To enable a fair comparison, we adopt the same network structure and input features for all compared methods, but  keep their respective objective functions.
DF feeds the output of the last layer to a K-way soft-max to predict the probability distribution over K classes, and minimizes a softmax loss.
DML uses the Euclidean distance metric, and minimizes the loss defined in \eqref{eq:ml2}.
The function $f_\balpha(\cdot)$ in \eqref{eq:obj} is implemented as a two-layer fully connected network with $\tanh$ as the squash function, and the same network structure is used for DF and DML.
Weight decay (conventional Frobenius norm regularization) is adopted in both DF and DML. And a range of weight decaying factor is tried and the best testing performance is reported.
The network is trained on WDref and then applied to the LFW. To reflect the discriminability of the transformed features, we only use a simple verification method, by comparing the cosine distance between a given face pair to a threshold.

\begin{table}[h!]
	\centering
	\caption{Verification accuracy and AUC on LFW}
	\begin{tabular}{c|c|c}
		Method & accuracy (\%)& AUC \\
		\hline
		High-dim LBP &  74.73 & 0.8222$\pm 0.01$ \\
		\hline
		DF & 88.72 & 0.9550$\pm 0.0029$ \\
		\hline
		DML & 90.20 & 0.9640$\pm 0.0027$ \\
		\hline
		GDT & {\bf 91.72} & {\bf 0.9724$\pm 0.0029$}
	\end{tabular}
	\label{tab:lfw_stats}
\end{table}

The ROCs for all methods are reported in Fig.~\ref{fig:lfw_compare}. Verification accuracies and area under the ROC curves (AUC) are listed in Table~\ref{tab:lfw_stats}.
High-dim LBP denotes the original features before transform.
DF optimizes for a classification objective,  the softmax loss, and separates well samples from different classes; however, it enforces no explicit constraints to assign similar representations to the same class.
DML enforces discriminative pairwise distance; but, as illustrated before, becomes less robust when restricted to a small training set.
As analyzed in Section \ref{sec:theory}, the proposed GDT is less conservative than DF for better discriminability; and, at the same time,
expects smaller generalization errors than DML by preserving the local geometry \eqref{eq:obj}. We observe that GDT outperforms both DF and DML by achieving a balance between discrimination and robustness.
Face verification accuracies are shown in Fig.~\ref{fig:lfw_lambda} by varying $\lambda$ from $0.6$ to $1$; and  peak accuracy is observed at $\lambda=0.9$, illustrating  the effectiveness of geometry preservation.
Considering the facts that both DF and DML are state-of-art deep learning methods, the improvements reported here clearly demonstrate the strength of GDT.

We demonstrated here how the discriminability of original features, e.g., high-dim LBP here, can be improved with a learned feature transform.
As emphasized,  the goal is not to reproduce the success of deep learning in face verification (which can't be done due to the lack of availability of the data used in the corresponding papers); thus, we perform verification by simply comparing the cosine distance between each pair with a threshold. Note that more advanced verification techniques such as \emph{JointBayes} \cite{Bface2012}  can always be adopted for improved accuracies; for example, \cite{highDimChen2013} reports 95.17\% accuracy by applying the JointBayes method on the high-dim LBP features. As observed in \cite{deepID2}, we also expect steady improvements in verification accuracy by increasing the number of subjects used in training a deep network.

\section{Conclusion}
We proposed a geometry-aware deep transform that unifies both the classification and metric learning objectives commonly optimized in learning a deep network.
We provided both experimental and theoretic illustrations to show that our method achieves a balance between  discrimination and robustness,
especially when restricted to a small training set.  We demonstrated the effectiveness of the proposed deep learning objective using real-world data for applications such as face verification.

\section*{Acknowledgement}
Work partially supported by NSF and DoD. 

{
\bibliographystyle{ieee}
\bibliography{geoAware}
}

\end{document}